\documentclass{article} 
\usepackage{iclr2026_conference,times}

\usepackage[utf8]{inputenc}
\usepackage[T1]{fontenc}
\usepackage{booktabs}
\usepackage{amsfonts}
\usepackage{amsmath,amssymb}
\usepackage{amsthm}

\usepackage{amsmath,amsfonts,bm}

\def\eqref#1{equation~\ref{#1}}

\def\1{\bm{1}}

\DeclareMathAlphabet{\mathsfit}{\encodingdefault}{\sfdefault}{m}{sl}
\SetMathAlphabet{\mathsfit}{bold}{\encodingdefault}{\sfdefault}{bx}{n}

\DeclareMathOperator*{\argmax}{arg\,max}

\usepackage{graphicx}
\usepackage{multirow}
\usepackage{array}
\usepackage{microtype}
\usepackage{xcolor}
\usepackage{algorithm}
\usepackage{algorithmic}

\usepackage{hyperref}
\usepackage{url}

\iclrfinalcopy

\newtheorem{theorem}{Theorem}
\newtheorem{definition}{Definition}
\newtheorem{proposition}{Proposition}
\newtheorem{remark}{Remark}

\newcommand{\Lsub}{\mathcal{L}_{\mathrm{sub}}}
\newcommand{\Dx}{D(\mathbf{x})}
\newcommand{\bx}{\mathbf{x}}
\newcommand{\cmark}{\(\checkmark\)}
\newcommand{\placeholder}[2]{\fbox{\rule{0pt}{#2}\rule{#1}{0pt}}}
\newcommand{\figwidth}{0.80}
\newcommand{\incfig}[3]{%
  \IfFileExists{#1}{\includegraphics[width=#2]{#1}}{\placeholder{#2}{#3}}%
}

\title{Depth-Entropy Guided Sampling for Training-Free LLM Reasoning}

\author{
Zibin Meng$^{1}$ \quad
Peng Xie$^{1}$ \quad
Kani Chen$^{1\thanks{Corresponding author.}}$ \\
\\
\texttt{\{zmengal, pxieaf\}@connect.ust.hk} \quad
\texttt{makchen@ust.hk}
\\[1ex]
$^{1}$The Hong Kong University of Science and Technology
}

\begin{document}

\maketitle

\begin{abstract}
Reinforcement learning (RL) has become the dominant paradigm for improving the reasoning capabilities of large language models, but it requires expensive training, curated data, and reward signals. Recent work shows that sampling from sharpened base-model distributions at test time recovers much of the RL gain, yet existing methods rely solely on output-layer likelihoods and ignore the transformer's internal forward-pass dynamics. We introduce \textbf{Depth-Entropy Guided Sampling (DEGS)}, a training-free, test-time method that exploits \emph{layer-wise entropy collapse} as an intrinsic quality signal. We observe that stronger reasoners---including RL-posttrained variants---exhibit a distinctive ``late collapse'': logit-lens--decoded entropy stays elevated until deeper layers before converging. We define a per-sequence \emph{collapse depth} $D(\bx)$ and a joint objective $\pi(\bx)\propto p(\bx)^\alpha\exp\!\bigl(\beta\, D(\bx)\bigr)$ that combines sequence likelihood with this depth-entropy structure, instantiated inside an MCMC power-sampling framework (DEGS-MCMC). Across three open-weight models and four reasoning benchmarks, this near-chance per-candidate signal compounds over the sampling trajectory into state-of-the-art training-free accuracy, with gains largest out of domain and on the harder splits---exactly where likelihood alone falls short---at single-digit-percent wall-clock overhead. DEGS narrowly trails an in-house GRPO reference on the math splits GRPO was trained for, yet surpasses it out of domain on GPQA for all three models, without any training, reward model, or labeled data.
\end{abstract}

\section{Introduction}
\label{sec:introduction}

Reinforcement learning (RL) with verifiable rewards---e.g., Group Relative Policy Optimization (GRPO)~\citep{shao2024deepseekmath}---is the dominant paradigm for improving the reasoning capabilities of large language models (LLMs), posttraining frontier models to sizeable gains on mathematical, scientific, and coding benchmarks~\citep{guo2025deepseek}. Yet these gains are costly: RL posttraining requires curated data, extensive tuning, and---most critically---a reliable reward signal, which is unavailable in many domains of practical interest~\citep{prabhudesai2025confidence}. A growing body of evidence further suggests they may not reflect fundamentally new capabilities: RL-posttrained models concentrate probability mass on traces that \emph{already} have high likelihood under the base model~\citep{he2025lifting,yue2025reinforcement}, and base models can even outperform them in the multi-shot regime due to degraded diversity~\citep{song2025exploration}. This \emph{distribution sharpening} hypothesis~\citep{shao2025spurious} casts RL mainly as a filter, redistributing pass@$k$ capability into single-shot performance rather than teaching genuinely novel reasoning.

This has inspired \emph{training-free} methods that sharpen the base distribution at inference time, part of a broader move toward scaling test-time computation~\citep{snell2024scaling,welleck2024metagen,brown2024monkeys}. \citet{karan2025reasoning} formalize this through the \emph{power distribution} $p(\bx)^\alpha$ ($\alpha > 1$), targeted by Metropolis--Hastings (MH) sampling, to match GRPO without training, data, or reward models; Scalable Power Sampling~\citep{ji2026scalable} and Power-SMC~\citep{azizi2026powersmc} improve efficiency via token-level approximation and a batch-parallel SMC scheme. Together they show base models are far more capable at single-shot reasoning than standard sampling reveals.

Despite their success, all existing training-free methods rely exclusively on the \emph{output-layer likelihood} $p(\bx)$, ignoring the structured intermediate representations of the forward pass~\citep{nostalgebraist2020logitlens,belrose2023tuned,wendler2024llamas}. This is a missed opportunity: \citet{wendler2024llamas} identify three forward-pass phases (input processing, a middle ``concept space,'' and output mapping), and \citet{tan2025bupo} find that entropy evolves from high-entropy exploration in early layers to deterministic refinement at the top.

We contribute an empirical observation bridging these lines: comparing \emph{layer-wise entropy trajectories} of base models against their RL-posttrained counterparts, stronger reasoners consistently exhibit \textbf{later entropy collapse} (Figure~\ref{fig:late-collapse})---base models lock in predictions at shallow layers, while RL-posttrained models sustain elevated internal uncertainty until much deeper. This extended ``deliberation phase'' is associated with downstream accuracy (a small but consistent effect, quantified in Section~\ref{sec:experiments}) and offers a new lens on what RL posttraining achieves internally. This motivates a simple idea: \emph{if late entropy collapse is characteristic of strong reasoning, we can use it as a training-free quality signal at test time}. We formalize it through a scalar \emph{collapse depth} $D(\bx)$ and construct a joint objective $\pi(\bx) \propto p(\bx)^\alpha \cdot \exp(\beta\, D(\bx))$ that augments the power distribution with this signal. The resulting method, \textbf{Depth-Entropy Guided Sampling (DEGS)}, integrates into the MH power-sampling framework (Algorithm~\ref{alg:degs-mcmc})---without parameter updates, reward models, or labeled data.

We keep two linked claims distinct: the \emph{motivation} (Figure~\ref{fig:late-collapse}) is a cross-stage contrast---posttrained reasoners collapse later than their base models---while the \emph{operative} claim our reweighting of base-model samples exploits is internal to a single base model: among its candidates, the correct ones collapse later than the incorrect ones, which we verify directly in Section~\ref{sec:experiments} (Figure~\ref{fig:collapse-depth-predicts-correctness}).

Our contributions are as follows:
\begin{enumerate}
  \item We identify \textbf{late entropy collapse} as an empirical signature of strong reasoning models and show that the collapse depth $D(\bx)$ is a statistically significant predictor of solution correctness across models and benchmarks (Section~\ref{sec:motivation}).
  \item We propose \textbf{DEGS}, built on a joint likelihood--depth objective $\pi(\bx)\propto p(\bx)^\alpha\exp(\beta D(\bx))$ (Section~\ref{sec:method}), in two decoders: a final-layer reranker, \emph{Best-of-$N$-Entropy}, which beats log-likelihood Best-of-$N$ in all twelve cells by modest margins, and our main method \textbf{DEGS-MCMC}, which exploits the full depth profile to improve over likelihood-only power sampling in all twelve cells by substantially larger margins (median $\approx 4$ points; Table~\ref{tab:main-results}).
  \item We characterize \textbf{how a weak per-candidate signal becomes a reliable decoding gain}: collapse depth is only a near-chance discriminator in isolation (AUC $\approx 0.6$; Section~\ref{sec:motivation}), yet by Proposition~\ref{prop:bounded-perturbation} it can only reorder near-equal-likelihood candidates, so compounding this tiebreak over the MCMC trajectory yields a gain positive in all twelve cells and largest on the harder, out-of-domain splits where likelihood is weakest.
\end{enumerate}

To our knowledge, DEGS is the first method to use the \emph{depth-wise evolution of internal uncertainty}---rather than output-layer signals alone---as a test-time selection criterion for reasoning, indicating that intermediate representations carry quality signals not reducible to sequence-level likelihood.

\section{Related Work}
\label{sec:related-work}

\paragraph{RL posttraining and distribution sharpening.}
RL with verifiable rewards (GRPO)~\citep{shao2024deepseekmath,guo2025deepseek}, building on RL from human feedback~\citep{ouyang2022training} and policy-gradient methods such as PPO~\citep{schulman2017ppo}, is now the dominant approach for improving LLM reasoning, but growing evidence attributes its gains primarily to \emph{distribution sharpening} rather than new capabilities~\citep{he2025lifting,yue2025reinforcement,song2025exploration}, motivating the training-free power-sampling line above~\citep{karan2025reasoning,ji2026scalable,azizi2026powersmc}; verifier-based reranking, process/outcome reward models, self-consistency, and pure repeated sampling~\citep{cobbe2021gsm8k,uesato2022process,lightman2024verify,wang2023selfconsistency,brown2024monkeys} are simpler alternatives. Using a quality signal \emph{inside} MH connects to QuEST~\citep{faria2024quest}, SMC steering of language-model programs~\citep{lew2023smc}, and twist-guided sequential Monte Carlo~\citep{zhao2024smc}, but those steer the chain with an \emph{external}, task-specific estimator (a learned MT metric, a trained twist), whereas DEGS reads its signal off the base model's own intermediate representations---no auxiliary model, reward, or training. All these methods rely on \emph{output-layer} signals; DEGS complements them with \emph{internal} depth-entropy dynamics.

\paragraph{Internal representations and logit lens.}
The logit lens~\citep{nostalgebraist2020logitlens}, tuned lens~\citep{belrose2023tuned}, and unifying frameworks such as Patchscopes~\citep{ghandeharioun2024patchscopes} decode intermediate hidden states into token distributions, revealing evolving internal beliefs; \citet{wendler2024llamas} show these carry information orthogonal to output tokens, and \citet{tan2025bupo} find early layers maintain high entropy while top layers converge. Closest to our use of depth is DoLa~\citep{chuang2024dola}, which contrasts logit-lens projections of a later and an earlier layer to reduce hallucination; unlike DoLa's per-token, two-layer contrast, DEGS leaves the token distribution unchanged and instead aggregates the full depth profile of entropy collapse into a sequence-level reweighting signal, used as a \emph{test-time selection signal} rather than for interpretation or RL optimization.

\paragraph{Entropy signals for reasoning.}
\citet{prabhudesai2025confidence} maximize base-model confidence via RL, \citet{zhao2025entropy} use self-entropy as an RL reward, and \citet{cui2025entropy} analyze \emph{output-policy} entropy collapse during RL training; all operate on output-layer entropy within a training framework, whereas DEGS exploits entropy \emph{across the depth dimension} entirely at \emph{test time}.

\paragraph{Confidence, calibration, and adaptive depth.}
Models' \emph{own} uncertainty is also a usable signal: large models are often well calibrated and can partly predict their own correctness~\citep{kadavath2022know}, and output-distribution measures such as semantic entropy flag unreliable generations~\citep{farquhar2024semantic}. Closest in spirit to our per-token collapse depth is \emph{confidence-based early exiting}~\citep{schuster2022calm}, which halts computation at a shallow layer once a per-token confidence criterion is met; DEGS inverts this efficiency framing---rather than exiting early to save compute, it \emph{reads how deep} the network defers commitment as a test-time quality signal, leaving the forward pass intact.

\paragraph{Notation.}
We consider an autoregressive LLM $p(\bx) = \prod_t p(x_t \mid x_{<t})$ with $L$ transformer layers and hidden states $\mathbf{h}_{l,t}$. The \emph{logit lens}~\citep{nostalgebraist2020logitlens} decodes layer $l$ via the unembedding $U$, $p_l(v \mid x_{\le t}) = \mathrm{softmax}(U\mathbf{h}_{l,t})_v$, with layer-wise entropy $H_l(t) = -\sum_v p_l(v) \log p_l(v)$; the \emph{power distribution}~\citep{karan2025reasoning} $p^\alpha(\bx) \propto p(\bx)^\alpha$ ($\alpha \ge 1$) is targeted by MH sampling. Full definitions are in Appendix~\ref{sec:preliminaries-full}.

\section{Motivating Observation: Late Entropy Collapse in Strong Reasoners}
\label{sec:motivation}

Comparing layer-wise entropy trajectories $\{H_l(t)\}_{l=1}^{L}$ across training stages (Figure~\ref{fig:late-collapse}), \textbf{base models} collapse early (entropy drops to near zero at shallow layers), while \textbf{RL-posttrained models} sustain elevated entropy significantly deeper before converging---consistent with the three-phase structure of \citet{wendler2024llamas}. In Figures~\ref{fig:late-collapse}(a)--(b), base Qwen2.5-Math-7B drops below 2.5~nats by layer~10 whereas its RL-posttrained variant stays above 5~nats until layer~15--20; the same pattern holds for DeepSeek-Math-7B (Figures~\ref{fig:late-collapse}(c)--(d)), and panel~(e) confirms higher mean collapse depth for posttrained models across all four settings. Combined with the finding that base models already contain high-quality reasoning traces~\citep{karan2025reasoning}, this motivates our hypothesis: \emph{among base-model candidates, those whose entropy collapses later are more likely to be correct}.

This is the cross-stage \emph{motivation}; the base-internal claim our method uses is verified in Section~\ref{sec:experiments} (Figure~\ref{fig:collapse-depth-predicts-correctness}). Appendix~\ref{app:additional} (Figure~\ref{fig:appendix-entropy-grid}) extends the contrast to all twelve overlays, including GSM8K~\citep{cobbe2021gsm8k} and Qwen2.5-Math-72B (motivation only).

\section{Method: Depth-Entropy Guided Sampling}
\label{sec:method}

We formalize the observation into a test-time algorithm: we define \emph{collapse depth}, construct the DEGS objective, and present a lightweight reranking variant alongside the MCMC variant we ship.

\subsection{Collapse Depth}
\label{sec:collapse-depth}

\begin{figure*}[t]
\centering
\setlength{\tabcolsep}{5pt}
\begin{tabular}{cc}
\begin{minipage}[t]{0.475\textwidth}
\centering
\textbf{(a) Qwen2.5-Math-7B on MATH500}\\[2pt]
\incfig{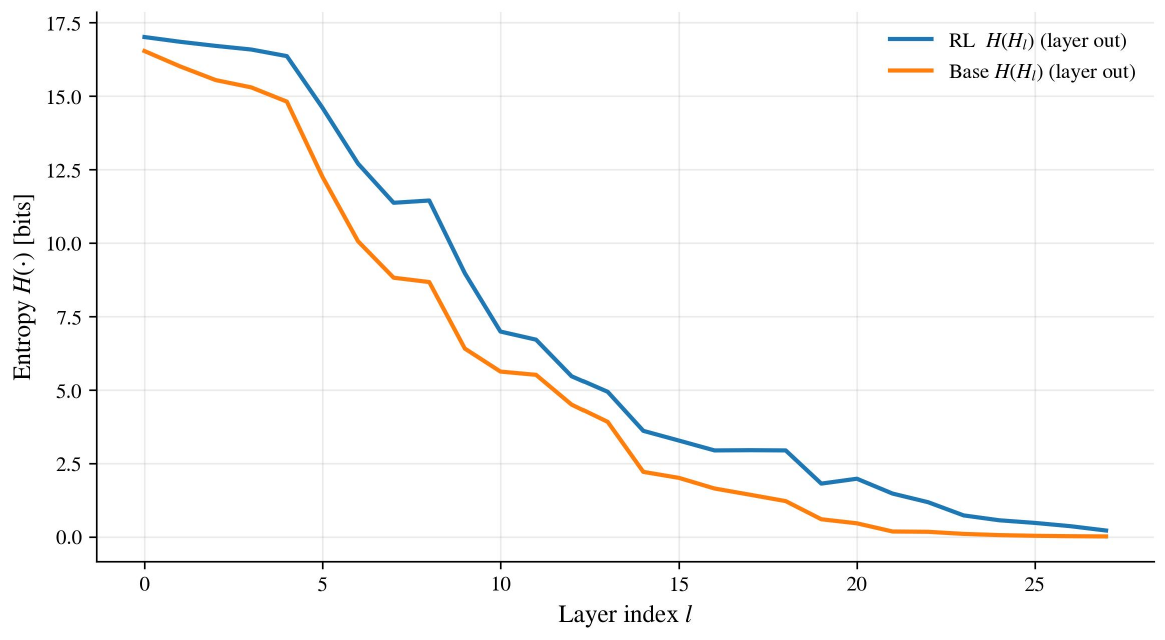}{\figwidth\linewidth}{1.55in}
\end{minipage}
&
\begin{minipage}[t]{0.475\textwidth}
\centering
\textbf{(b) Qwen2.5-Math-7B on DeepMind500}\\[2pt]
\incfig{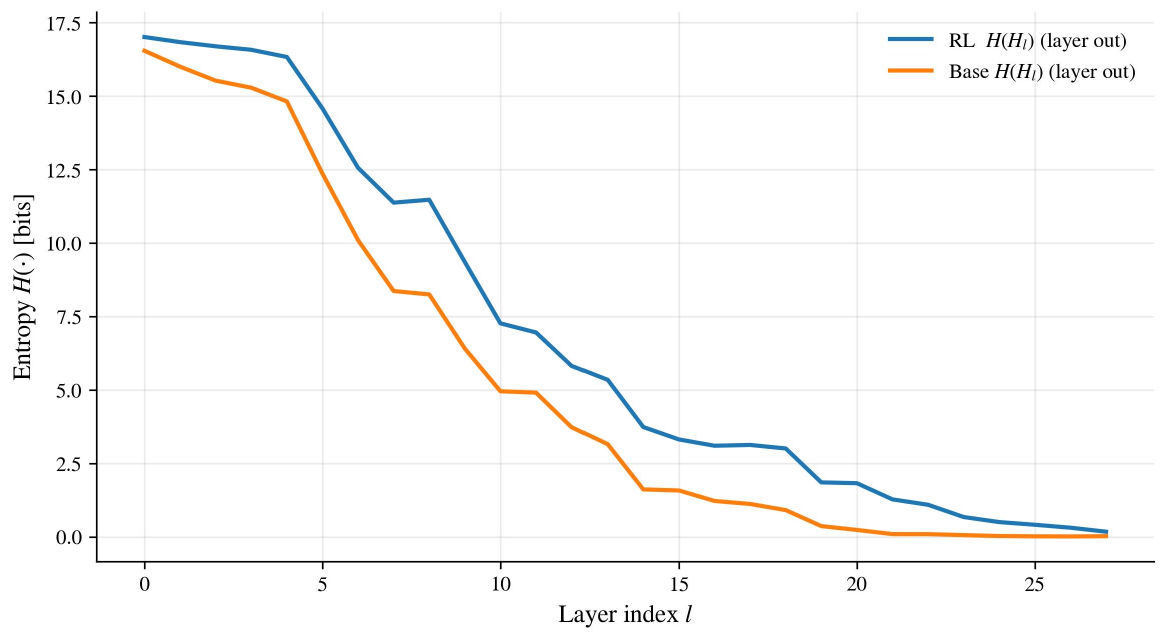}{\figwidth\linewidth}{1.55in}
\end{minipage}
\\[5pt]
\begin{minipage}[t]{0.475\textwidth}
\centering
\textbf{(c) DeepSeek-Math-7B on MATH500}\\[2pt]
\incfig{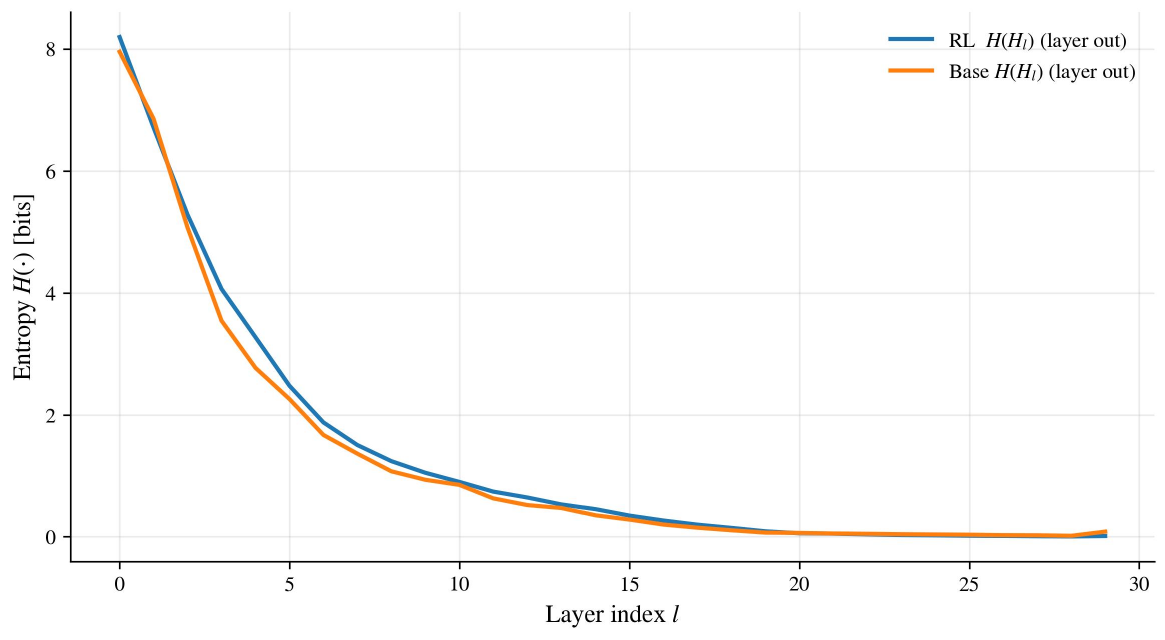}{\figwidth\linewidth}{1.55in}
\end{minipage}
&
\begin{minipage}[t]{0.475\textwidth}
\centering
\textbf{(d) DeepSeek-Math-7B on DeepMind500}\\[2pt]
\incfig{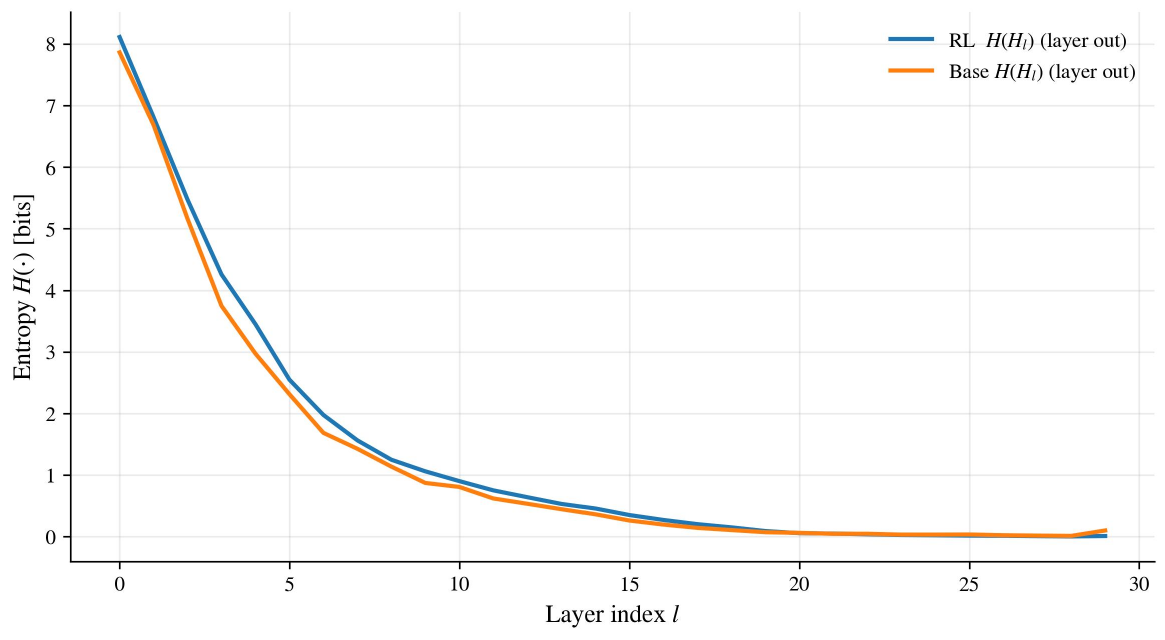}{\figwidth\linewidth}{1.55in}
\end{minipage}
\end{tabular}

\vspace{2pt}
\begin{minipage}[t]{0.72\textwidth}
\centering
\textbf{(e) Summary statistics of collapse depth}\\[2pt]
\incfig{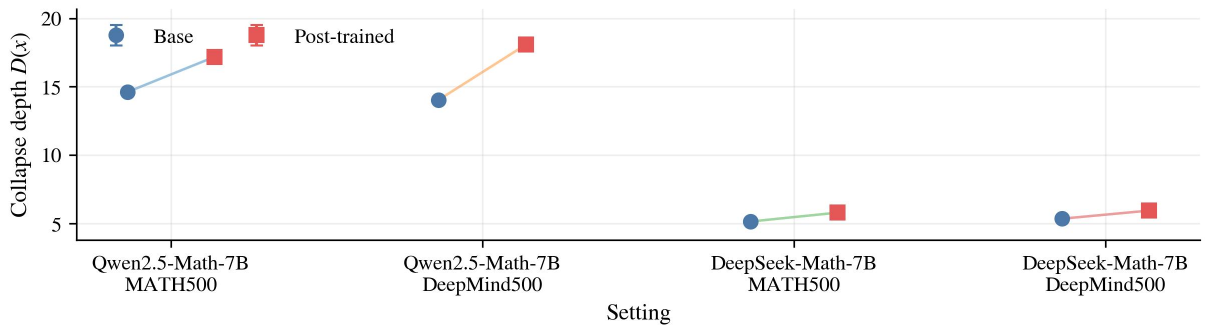}{\figwidth\linewidth}{0.90in}
\end{minipage}
\caption{Late entropy collapse in stronger reasoners. Panels (a)--(d) plot layer-wise entropy trajectories for representative base and posttrained models on mathematical benchmarks; panel (e) summarizes the effect via the mean collapse \emph{layer} $\overline{d_t}$ (average over tokens of the shallowest probed layer reaching the threshold, in absolute layer index---not the normalized $D(\bx)\in(0,1]$ of Definition~\ref{def:sequence-collapse}). Stronger reasoners collapse later in depth---equivalently, a larger $D(\bx)$---suggesting that internal uncertainty dynamics may provide a useful test-time signal.}
\label{fig:late-collapse}
\end{figure*}

\paragraph{Per-token collapse depth.}
Consider a generated sequence $\bx = (x_0, \ldots, x_T)$ from a base model $p$. To reduce cost, we compute entropy on a \emph{layer subset}
\begin{equation}\label{eq:layer-subset}
  \Lsub = \{l_1, \ldots, l_K\} \subset \{1, \ldots, L\},
\end{equation}
with $K \ll L$; by default we use evenly spaced $\Lsub = \{4, 8, 12, 16, 20, 24, 28\}$ for a 28-layer model ($K=7$).

\begin{definition}[Per-token collapse depth]\label{def:per-token-collapse}
Given an entropy threshold $\tau > 0$, the \emph{collapse depth} of token position $t$ is
\begin{equation}\label{eq:per-token-collapse}
  d_t = \min\bigl\{l \in \Lsub \;\big|\; H_l(t) \le \tau\bigr\},
\end{equation}
with the convention $d_t = L$ if no layer in $\Lsub$ satisfies the threshold.
\end{definition}

Intuitively $d_t$ measures how deep the network processes before becoming ``confident'' about token~$t$: small $d_t$ indicates (possibly premature) early commitment, large $d_t$ sustained deliberation.

\paragraph{Sequence-level collapse depth.}
\begin{definition}[Sequence-level collapse depth]\label{def:sequence-collapse}
The \emph{normalized collapse depth} of $\bx = (x_0, \ldots, x_T)$ is
\begin{equation}\label{eq:Dx}
  \Dx = \frac{1}{T+1} \sum_{t=0}^{T} \frac{d_t}{L}.
\end{equation}
\end{definition}

By construction $\Dx \in (0, 1]$; larger values indicate internal uncertainty persisting to deeper layers on average. Our central hypothesis (validated in Section~\ref{sec:experiments}) is that \emph{correct solutions tend to have larger $\Dx$}. The threshold $\tau$ (default $0.25$ nats) controls sensitivity; DEGS is robust across a range of values (Table~\ref{tab:hyperparameters-full}).

\subsection{The DEGS Objective}
\label{sec:degs-objective}

Standard power sampling~\citep{karan2025reasoning} targets $p(\bx)^\alpha$ using only the output-layer likelihood. We augment it with the depth-entropy structure captured by $\Dx$.

\begin{definition}[DEGS target distribution]\label{def:degs-target}
The DEGS target distribution over sequences is
\begin{equation}\label{eq:degs-target}
  \pi(\bx) \propto p(\bx)^\alpha \cdot \exp\!\bigl(\beta \cdot \Dx \bigr),
\end{equation}
where $\alpha \ge 1$ is the likelihood exponent and $\beta \ge 0$ the depth weight.
\end{definition}

The \textbf{likelihood term} $p(\bx)^\alpha$ favors globally coherent sequences (as in power sampling); the \textbf{depth-entropy term} $\exp(\beta\,\Dx)$ favors sequences whose internal processing resembles strong reasoners'---elevated uncertainty across layers before committing. It reduces to power sampling at $\beta = 0$ and depth-only scoring at $\alpha = 0$ (not expected to perform well alone). Equivalently, the log-score for ranking is
\begin{equation}\label{eq:degs-score}
  S(\bx) = \alpha \cdot \log p(\bx) + \beta \cdot \Dx.
\end{equation}
Defaults: $\alpha{=}4$, $\beta{=}5$, $\tau{=}0.25$ nats, $\Lsub{=}\{4,8,\ldots,28\}$, $N{=}16$ candidates, $T_{\mathrm{MCMC}}{=}10$ MH steps; full ranges in Table~\ref{tab:hyperparameters-full}.

\subsection{Reranking Variants}
\label{sec:degs-reranking}

Both variants use one scoring signal but apply it differently: reranking applies the depth tiebreak once, whereas DEGS-MCMC compounds it across the sampling trajectory. \textbf{Best-of-$N$-Entropy} ranks a fixed pool of $N$ candidates by final-layer entropy alone; it beats log-likelihood Best-of-$N$ in all twelve cells (Table~\ref{tab:main-results}), but by modest margins and well below the MCMC family. \textbf{DEGS reranking} instead uses the full depth score $S(\bx)=\alpha\log p(\bx)+\beta\Dx$ (Algorithm~\ref{alg:degs-rerank}, Appendix~\ref{app:rerank-algorithm}), computed from one teacher-forced forward that caches hidden states at $\Lsub$. By Proposition~\ref{prop:bounded-perturbation} the depth term only reorders near-equal-likelihood candidates, so a single argmax over $N{=}16$ extracts little from it; we therefore keep DEGS reranking as a controlled probe of \emph{where} in depth the signal acts (Table~\ref{tab:lsub-tradeoff}) and ship DEGS-MCMC, which compounds the same tiebreak over the trajectory at a few percent overhead relative to Best-of-$N$ (Table~\ref{tab:compute-accounting}).

\subsection{DEGS-MCMC: Integration with Power Sampling}
\label{sec:degs-mcmc}

For a more sample-efficient exploration of~\eqref{eq:degs-target}, we integrate DEGS into the MCMC power-sampling framework of \citet{karan2025reasoning}.

\paragraph{Acceptance ratio.}
In standard power sampling the target is $p(\bx)^\alpha$ with MH acceptance ratio $A_{\mathrm{std}}(\bx', \bx) = \min\bigl(1, \frac{p(\bx')^\alpha\, q(\bx \mid \bx')}{p(\bx)^\alpha\, q(\bx' \mid \bx)}\bigr)$, where $q$ is the proposal (random-index resampling). Replacing the target with $\pi(\bx)$ from~\eqref{eq:degs-target} yields
\begin{equation}\label{eq:mh-degs}
  A_{\mathrm{DEGS}}(\bx', \bx) = \min\!\left(1,\; \frac{p(\bx')^\alpha \,\exp\!\bigl(\beta\, D(\bx')\bigr)\, q(\bx \mid \bx')}{p(\bx)^\alpha \,\exp\!\bigl(\beta\, \Dx\bigr)\, q(\bx' \mid \bx)}\right).
\end{equation}
All other components---proposal, block structure, chain selection---remain identical to Algorithm~1 of \citet{karan2025reasoning}; the only addition is an entropy-probing pass to compute $D(\cdot)$.

\paragraph{Algorithmic structure.}
Following \citet{karan2025reasoning}, we divide the sequence into blocks of size $B$ with intermediate targets $\pi_k(x_{0:kB}) \propto p(x_{0:kB})^\alpha \exp\!\bigl(\beta D(x_{0:kB})\bigr)$, extending each prefix with the proposal model and then running $N_{\mathrm{MCMC}}$ MH steps with~\eqref{eq:mh-degs} (Algorithm~\ref{alg:degs-mcmc}).

\begin{algorithm}[t]
\caption{DEGS-MCMC (Entropy-Guided Power Sampling)}
\label{alg:degs-mcmc}
\begin{algorithmic}[1]
\REQUIRE Base model $p$; proposal model $p_{\mathrm{prop}}$; prompt; $\alpha, \beta, \tau$; block size $B$; max length $T$; MCMC steps $N_{\mathrm{MCMC}}$; layer subset $\Lsub$
\ENSURE Sequence $\bx_{0:T} \sim \pi$
\FOR{$k = 0$ \TO $\lceil T/B \rceil - 1$}
  \STATE \textbf{Initialize:} Extend prefix $\bx_{0:kB}$ by sampling $x_t \sim p_{\mathrm{prop}}(\cdot \mid x_{<t})$ for $t = kB{+}1, \ldots, (k{+}1)B$
  \STATE Set current state $\bx \gets \bx_{0:(k+1)B}$
  \FOR{$n = 1$ \TO $N_{\mathrm{MCMC}}$}
    \STATE Sample index $m \sim \mathrm{Uniform}\{1, \ldots, (k{+}1)B\}$
    \STATE Construct proposal $\bx'$: keep prefix $x_{0:m-1}$, resample $x'_t \sim p_{\mathrm{prop}}(\cdot \mid x'_{<t})$ for $t = m, \ldots, (k{+}1)B$
    \STATE Compute $\log p(\bx)$, $\log p(\bx')$, $D(\bx)$, $D(\bx')$, $q(\bx \mid \bx')$, $q(\bx' \mid \bx)$
    \STATE $A \gets \min\!\Big(1,\; \frac{p(\bx')^\alpha \exp(\beta\, D(\bx'))\, q(\bx \mid \bx')}{p(\bx)^\alpha \exp(\beta\, \Dx)\, q(\bx' \mid \bx)}\Big)$
    \IF{$\mathrm{Uniform}(0,1) \le A$}
      \STATE $\bx \gets \bx'$ \hfill \COMMENT{accept proposal}
    \ENDIF
  \ENDFOR
  \STATE Fix prefix $\bx_{0:(k+1)B} \gets \bx$
\ENDFOR
\RETURN $\bx_{0:T}$
\end{algorithmic}
\end{algorithm}

\paragraph{Convergence and cost.}
The chain converges to $\pi$ as $N_{\mathrm{MCMC}} \to \infty$ by standard MCMC theory~\citep{neal1993probabilistic}---$\pi(\bx)$ has nonzero mass wherever $p(\bx) > 0$ and the random-index resampling proposal is irreducible and aperiodic~\citep{karan2025reasoning} (full statement and proof in Appendix~\ref{app:convergence}). Cost is dominated by the $\frac{N_{\mathrm{MCMC}}\, T^2}{4B}$ generated tokens of power sampling, plus $O(N_{\mathrm{MCMC}} \cdot K)$ entropy computations at the probed layers (Table~\ref{tab:compute-accounting}); design choices and defaults are in Appendices~\ref{app:design-choices}--\ref{app:implementation}.

\section{Experiments}
\label{sec:experiments}

\paragraph{Setup.}
We evaluate DEGS on mathematical reasoning (MATH500~\citep{hendrycks2021math,lightman2024verify} and DeepMind500, the latter a held-out subset we construct from the DeepMind mathematics dataset~\citep{saxton2019deepmind}), code generation (HumanEval~\citep{chen2021humaneval}), and scientific reasoning (GPQA-Diamond~\citep{rein2024gpqa}) with three open-weight base models---Qwen2.5-7B~\citep{qwen2025qwen25}, Qwen2.5-Math-7B~\citep{yang2024qwen25math}, and DeepSeek-Math-7B~\citep{shao2024deepseekmath} (full benchmark, grading, and training details in Appendix~\ref{app:implementation}). The zero-shot chain-of-thought prompts~\citep{wei2022cot,kojima2022zeroshot}, answer parsers, and graders are held \emph{identical} across all decoding methods, so accuracy differences reflect the decoder rather than the harness. The \emph{GRPO (MATH)} rows are not transcribed from prior work: we train one GRPO reference per base model under a standard RLVR recipe~\citep{lambert2024tulu3} (DeepScaleR prompts~\citep{deepscaler2025}, a ground-truth math verifier, group size $16$) and evaluate it under the same protocol, so GRPO functions as a fair, in-domain reference trained under matched conditions. Because DEGS adds an entropy-probing pass, all methods are compared at matched test-time compute, normalized to forward-equivalent units (Table~\ref{tab:compute-accounting}, Section~\ref{sec:compute-budget}).

\paragraph{Baselines.}
We compare against: (1)~base decoding, (2)~low-temperature sampling, (3)~Best-of-$N$ with log-likelihood, (4)~MCMC power sampling~\citep{karan2025reasoning}, (5)~Scalable Power Sampling~\citep{ji2026scalable}, (6)~Power-SMC~\citep{azizi2026powersmc}, and (7)~GRPO-posttrained models~\citep{shao2024deepseekmath}. Our methods are \emph{Best-of-$N$-Entropy} (final-layer-entropy reranking) and \emph{DEGS-MCMC}.

\subsection{Does Collapse Depth Alone Predict Correctness?}

\begin{figure*}[t]
\centering
\setlength{\tabcolsep}{5pt}
\begin{tabular}{ccc}
\multicolumn{3}{c}{\textbf{Qwen2.5-Math-7B on MATH500}} \\[2pt]
\begin{minipage}[t]{0.315\textwidth}
\centering
\incfig{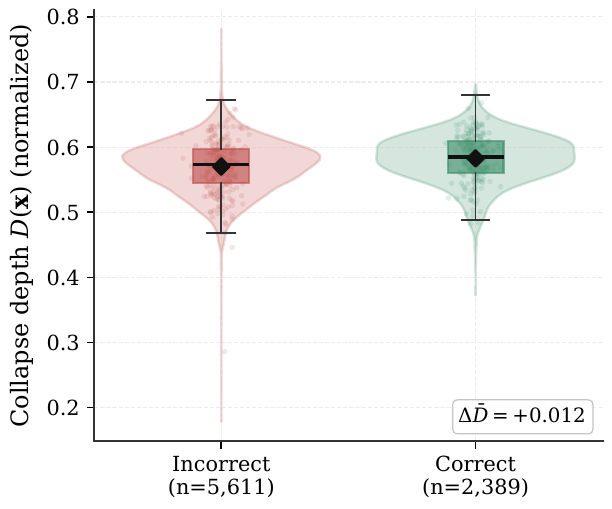}{\figwidth\linewidth}{1.30in}\\[-2pt]{\small (a)}
\end{minipage}
&
\begin{minipage}[t]{0.315\textwidth}
\centering
\incfig{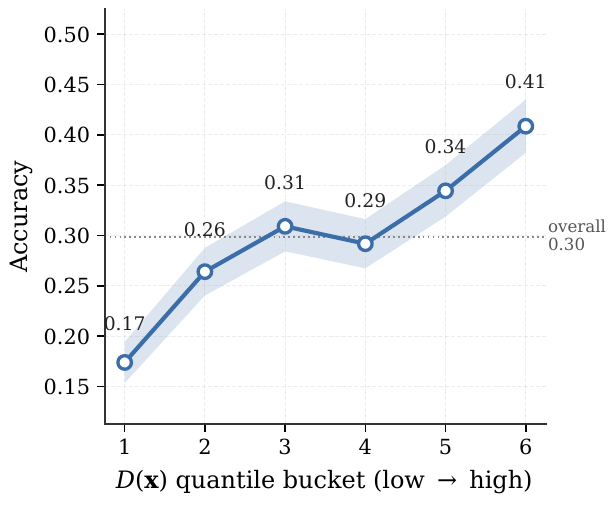}{\figwidth\linewidth}{1.30in}\\[-2pt]{\small (b)}
\end{minipage}
&
\begin{minipage}[t]{0.315\textwidth}
\centering
\incfig{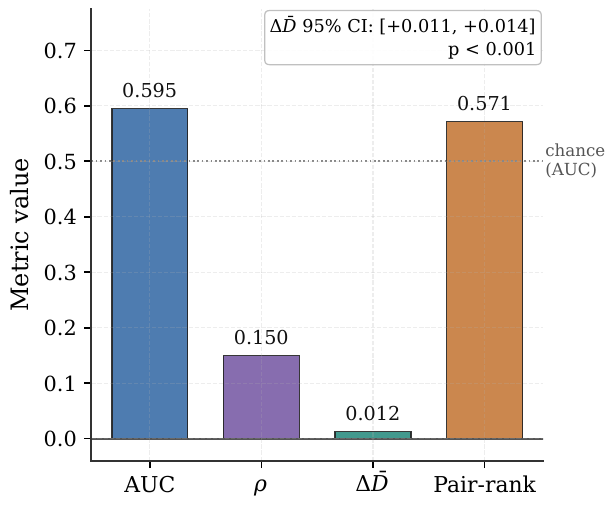}{\figwidth\linewidth}{1.30in}\\[-2pt]{\small (c)}
\end{minipage}
\\[6pt]
\multicolumn{3}{c}{\textbf{Qwen2.5-Math-7B on DeepMind500}} \\[2pt]
\begin{minipage}[t]{0.315\textwidth}
\centering
\incfig{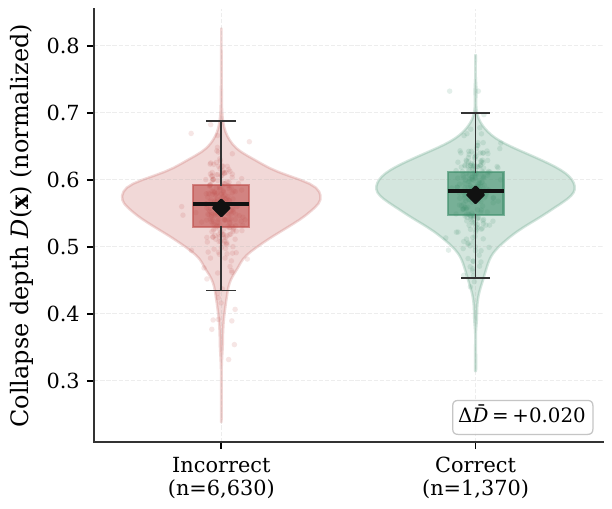}{\figwidth\linewidth}{1.30in}\\[-2pt]{\small (d)}
\end{minipage}
&
\begin{minipage}[t]{0.315\textwidth}
\centering
\incfig{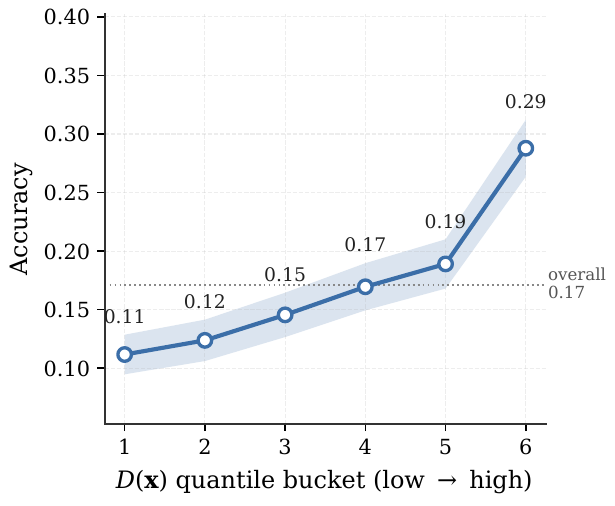}{\figwidth\linewidth}{1.30in}\\[-2pt]{\small (e)}
\end{minipage}
&
\begin{minipage}[t]{0.315\textwidth}
\centering
\incfig{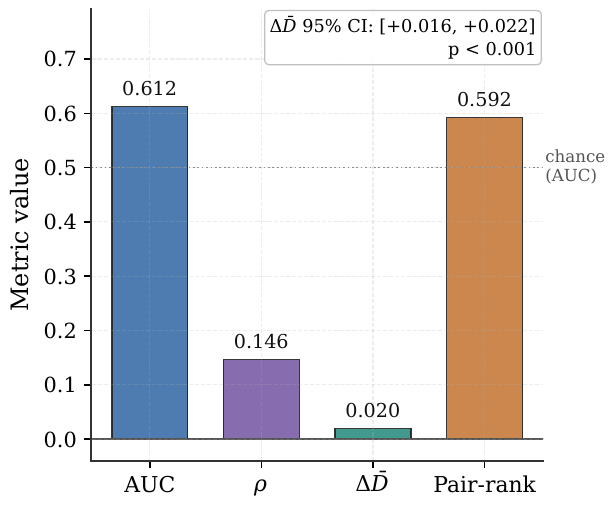}{\figwidth\linewidth}{1.30in}\\[-2pt]{\small (f)}
\end{minipage}
\\[4pt]
{\small $D(\bx)$ for correct vs.\ incorrect}
&
{\small Accuracy by $D(\bx)$ bucket}
&
{\small Predictive metrics for $D(\bx)$}
\end{tabular}
\caption{Collapse depth $D(\bx)$ as a predictor of correctness, on two representative settings: Qwen2.5-Math-7B on MATH500 (a--c) and DeepMind500 (d--f). Each row shows the distribution of $D(\bx)$ for correct vs.\ incorrect candidates, the empirical accuracy of candidates by $D(\bx)$ quantile bucket, and the predictive statistics (AUC, Spearman $\rho$, mean gap $\Delta\bar{D}$, pairwise ranking accuracy). In both settings, later collapse is positively associated with correctness.}
\label{fig:collapse-depth-predicts-correctness}
\end{figure*}

Before using $D(\bx)$ as a scoring signal, we verify it correlates with correctness: for each problem we sample $N{=}16$ candidates ($T{=}0.8$, top-$p{=}0.9$), judge correctness, and compute $D(\bx)$. On Qwen2.5-Math-7B / MATH500 (Figure~\ref{fig:collapse-depth-predicts-correctness}a--c), correct candidates have significantly higher mean collapse depth than incorrect ones ($p < 0.001$), and accuracy climbs monotonically across $D(\bx)$ buckets, from $0.17$ in the lowest to $0.41$ in the highest---a more than twofold spread that recurs, more sharply, on the harder DeepMind500 split (d--f, top bucket $\approx 2.6\times$ the bottom). Taken on its own, however, collapse depth is a \emph{weak per-candidate} discriminator: near-chance in isolation (AUC $\approx 0.6$), with a low Spearman $\rho \approx 0.15$ and pairwise-ranking accuracy barely above one-half (c,f). The significance therefore reflects the size of the $n{=}8000$ candidate pool rather than a large effect. This is precisely the regime DEGS targets: by Proposition~\ref{prop:bounded-perturbation} a moderate $\beta$ shifts each log-score by at most $\beta$ nats, so the depth term acts only as a tiebreaker among near-equal-likelihood candidates---where being right slightly more often than chance suffices once compounded over the MCMC trajectory (Section~\ref{sec:main-results}).

\subsection{Main Results}
\label{sec:main-results}

\paragraph{DEGS is the strongest training-free decoder in $11$ of $12$ cells.}
DEGS-MCMC attains the best training-free accuracy on all four benchmarks for both Qwen backbones and on three of four for DeepSeek-Math-7B. On Qwen2.5-Math-7B it improves over MCMC power sampling~\citep{karan2025reasoning} on every benchmark and beats the strongest prior training-free method, Power-SMC, on all four---though the margin narrows to $0.5$--$2$ points on the math splits, where the depth term refines rather than replaces the likelihood signal. Every power-sampling method far outstrips naive decoding (on Qwen2.5-7B / DeepMind500, $0.250\!\to\!0.573$). The lone exception is DeepSeek-Math-7B / GPQA, where low-temperature decoding is anomalously strong ($0.430$, beating every power-sampling variant including DEGS)---tellingly the setting where the late-collapse signal is weakest (Figure~\ref{fig:late-collapse}(c--e)). This is a boundary of the method: when the depth signal is uninformative, the likelihood-driven power distribution governs.

\paragraph{DEGS narrows the gap to GRPO on math and surpasses it out of domain.}
GRPO is posttrained on MATH and stays strongest in-domain, edging DEGS by $1.8$ points on MATH500 and a comparable margin on DeepMind500; even so, with no training or reward signal, DEGS closes most of the $3.7$- and $5.8$-point gaps plain power sampling leaves on those splits. Out of domain the picture reverses: DEGS exceeds GRPO on GPQA for all three models and on HumanEval for both Qwen models---reaching $0.769$ on Qwen2.5-7B HumanEval, well clear of GRPO---with GRPO keeping only a slight edge on DeepSeek-Math-7B HumanEval. This supports our framing: GRPO redistributes capability within the distribution it was optimized on, whereas DEGS reweights base-model samples and preserves cross-domain coverage. The per-cell pattern tracks the late-collapse signal: the two Qwen backbones, with the largest base-versus-posttrained collapse-depth separation (Figure~\ref{fig:late-collapse}), show the clearest gains, while DeepSeek-Math-7B---where that separation is smallest---shows almost none.

\begin{table*}[t]
\caption{Main benchmark results across models, tasks, and decoding methods, comparing standard decoding baselines, power-sampling baselines, our entropy-guided variants, and GRPO references. For DeepSeek-Math-7B, the Base / Low-temperature rows reflect the un-tuned base checkpoint under our standardized decoding protocol; with no instruction tuning the model frequently fails to emit a parsable boxed answer.}
\label{tab:main-results}
\centering
\small
\setlength{\tabcolsep}{4.5pt}
\resizebox{\textwidth}{!}{
\begin{tabular}{lcccc}
\toprule
Method & MATH500 $\uparrow$ & DeepMind500 $\uparrow$ & HumanEval Pass@1 $\uparrow$ & GPQA $\uparrow$ \\
\midrule
\multicolumn{5}{l}{\textbf{Qwen2.5-7B}} \\
Base                          & 0.498 & 0.250 & 0.329 & 0.278 \\
Low-temperature               & 0.628 & 0.342 & 0.524 & 0.303 \\
Best-of-$N$                   & 0.650 & 0.395 & 0.609 & 0.282 \\
MCMC Power Sampling           & 0.706 & 0.537 & 0.622 & 0.318 \\
Scalable Power Sampling       & 0.708 & 0.549 & 0.756 & 0.349 \\
Power-SMC                     & 0.714 & 0.563 & 0.761 & 0.351 \\
Best-of-$N$-Entropy (ours)    & \textbf{0.664} & \textbf{0.412} & \textbf{0.632} & \textbf{0.348} \\
DEGS-MCMC (ours)              & \textbf{0.728} & \textbf{0.573} & \textbf{\underline{0.769}} & \textbf{\underline{0.393}} \\
\cmidrule(lr){1-5}
GRPO (MATH)                   & \textbf{0.740} & \textbf{0.598} & \textbf{0.561} & \textbf{0.354} \\
\midrule
\multicolumn{5}{l}{\textbf{Qwen2.5-Math-7B}} \\
Base                          & 0.496 & 0.332 & 0.329 & 0.278 \\
Low-temperature               & 0.690 & 0.482 & 0.512 & 0.353 \\
Best-of-$N$                   & 0.684 & 0.465 & 0.512 & 0.343 \\
MCMC Power Sampling           & 0.748 & 0.574 & 0.573 & 0.389 \\
Scalable Power Sampling       & 0.758 & 0.593 & 0.604 & 0.409 \\
Power-SMC                     & 0.762 & 0.601 & 0.612 & 0.413 \\
Best-of-$N$-Entropy (ours)    & \textbf{0.687} & \textbf{0.479} & \textbf{0.537} & \textbf{0.369} \\
DEGS-MCMC (ours)              & \textbf{0.767} & \textbf{0.619} & \textbf{\underline{0.618}} & \textbf{\underline{0.426}} \\
\cmidrule(lr){1-5}
GRPO (MATH)                   & \textbf{0.785} & \textbf{0.632} & \textbf{0.537} & \textbf{0.399} \\
\midrule
\multicolumn{5}{l}{\textbf{DeepSeek-Math-7B}} \\
Base                          & 0.362 & 0.212 & 0.415 & 0.333 \\
Low-temperature               & 0.366 & 0.269 & 0.427 & 0.430 \\
Best-of-$N$                   & 0.420 & 0.276 & 0.433 & 0.338 \\
MCMC Power Sampling           & 0.424 & 0.307 & 0.470 & 0.345 \\
Scalable Power Sampling       & 0.464 & 0.314 & 0.487 & 0.364 \\
Power-SMC                     & 0.467 & 0.329 & 0.498 & 0.386 \\
Best-of-$N$-Entropy (ours)    & \textbf{0.423} & \textbf{0.298} & \textbf{0.453} & \textbf{0.341} \\
DEGS-MCMC (ours)              & \textbf{0.475} & \textbf{0.336} & \textbf{0.508} & \textbf{\underline{0.392}} \\
\cmidrule(lr){1-5}
GRPO (MATH)                   & \textbf{0.492} & \textbf{0.352} & \textbf{0.524} & \textbf{0.333} \\
\bottomrule
\end{tabular}
}
\end{table*}

\paragraph{Takeaway and statistical reliability.}
The depth-entropy term never lowers accuracy relative to the likelihood-only power sampling it extends: it improves on that baseline in all twelve cells and is the best training-free decoder in eleven, concentrating its gains exactly where likelihood is weakest---out-of-domain GPQA and the harder DeepMind500 split---rather than acting as a new capability. The comparison is \emph{within-framework}: toggling only $\beta$ ($5$ vs.\ $0$) holds decoder, harness, and budget fixed, so the uniformly positive per-cell deltas of Table~\ref{tab:main-results} (median $\approx 4$ points) are not confounded by the implementation differences a head-to-head with an external baseline would introduce. We claim only their \emph{uniform direction}, not any single magnitude: with $\approx 1.9$ points of binomial error on a $500$-item split we report no per-cell variance, and although the twelve-cell direction passes a one-sided sign test ($p = 2^{-12}\approx 2.4\times10^{-4}$), we read this as corroborating the $\beta$-toggle rather than as a standalone test, since the cells share models and benchmarks and are not independent.

\subsection{Accuracy versus Compute Budget}
\label{sec:compute-budget}

\paragraph{Matched-budget protocol.}
Because DEGS adds an entropy-probing pass, we normalize all methods to forward-equivalent units (Table~\ref{tab:compute-accounting}): each reranking method is charged $N$ generation passes plus probing, each MCMC-family method $B \cdot T_{\mathrm{MCMC}} = 160$ proposal forwards, and layer-subset probing $K/L$ of a forward per probed candidate.

\paragraph{The depth signal is nearly free.}
The probe is teacher-forced and touches only $K$ of $L$ layers, so it costs a fraction of a full forward pass rather than a whole one. On Qwen2.5-Math-7B / MATH500 it adds just $+3.3\%$ per-problem wall-clock to DEGS reranking and $+8.8\%$ to DEGS-MCMC over their likelihood-only counterparts (Table~\ref{tab:compute-accounting}). The accuracy gains of Table~\ref{tab:main-results} therefore come at single-digit-percent overhead---far below the near-doubling that a naive count of one extra forward per candidate would imply.

\paragraph{The signal is localizable in depth.}
Sweeping $\Lsub$ (full sweep in Table~\ref{tab:lsub-tradeoff}, Appendix~\ref{app:implementation}) separates two effects that ``probe fewer layers'' conflates---how \emph{many} layers enter the grid versus \emph{where} they sit. Count is almost free: thinning the default seven-layer grid down to five leaves MATH500 and GPQA accuracy unchanged, and probing all $28$ layers buys only $+0.4$ point on MATH500 at $1.05\times$ cost. Placement, by contrast, is decisive: holding the count at four and sliding the window toward the input lowers accuracy monotonically---$76.6\!\to\!72.8$ on MATH500, with the same monotone decline on GPQA---until an input-only probe falls below even the $\beta{=}0$ baselines (Table~\ref{tab:core-ablation}). Because this ordering recurs across both out-of-domain GPQA and in-domain MATH500, it reflects \emph{where} entropy collapse carries discriminative content rather than a benchmark quirk, and the probe can safely be restricted to those layers at no accuracy penalty.

\subsection{Ablation Studies}

\begin{table*}[t]
\caption{Core ablation of DEGS components on Qwen2.5-Math-7B, isolating likelihood-based scoring, depth-based entropy scoring, and alternative entropy signals under a matched-compute protocol ($N=16$, MCMC steps $=10$, block size $B=16$).}
\label{tab:core-ablation}
\centering
\small
\setlength{\tabcolsep}{4.6pt}
\resizebox{\textwidth}{!}{
\begin{tabular}{lccccccc}
\toprule
Method & $\alpha \log p(\bx)$ & $\beta D(\bx)$ & Final-layer entropy & Selection mode & MATH500 $\uparrow$ & DeepMind500 $\uparrow$ & GPQA $\uparrow$ \\
\midrule
Best-of-$N$ (log-likelihood)      & \cmark & --     & --     & Rerank & 68.4 & 46.5 & 34.3 \\
Power Sampling ($\beta = 0$)      & \cmark & --     & --     & MCMC   & 74.8 & 57.4 & 38.9 \\
Depth-only scoring ($\alpha = 0$) & --     & \cmark & --     & Rerank & 63.2 & 41.4 & 32.8 \\
Best-of-$N$-Entropy (ours)        & --     & --     & \cmark & Rerank & 68.7 & 47.9 & 36.9 \\
DEGS reranking                    & \cmark & \cmark & --     & Rerank & 71.6 & 54.8 & 36.2 \\
DEGS-MCMC                         & \cmark & \cmark & --     & MCMC   & 76.7 & 61.9 & 42.6 \\
\bottomrule
\end{tabular}
}
\end{table*}

\paragraph{Both terms are necessary, and the signal lives in the depth profile, not the final layer.}
DEGS-MCMC improves over likelihood-only power sampling ($\beta{=}0$) on all three benchmarks---up to $+4.5$ on DeepMind500 (Table~\ref{tab:core-ablation})---and over depth-only scoring by a wide margin; neither term alone reproduces the combined result. The gain is not merely an entropy effect: final-layer-entropy reranking lands barely above Best-of-$N$ on MATH500 yet trails DEGS-MCMC by up to $14$ points on DeepMind500, so \emph{when} a sequence's predictions stabilize is more informative than its final-layer entropy alone (Appendix~\ref{app:design-choices})---the distinction that motivates collapse depth. Added to either selection mode, the depth term yields a positive marginal gain over its matched likelihood-only baseline on all three benchmarks, largest in reranking on the harder DeepMind500 split, as Proposition~\ref{prop:bounded-perturbation} predicts; but applying the tiebreak once (reranking) versus at each of $\sim\!160$ accept/reject steps (DEGS-MCMC) is what separates them---DEGS-MCMC dominates reranking on all three benchmarks, and reranking serves only as a controlled probe of \emph{where} the signal acts (Table~\ref{tab:lsub-tradeoff}).

\section{Discussion and Conclusion}
\label{sec:discussion}

We introduced DEGS, a training-free method that turns layer-wise entropy collapse depth into a test-time quality signal: stronger reasoners collapse later, which we operationalize as a pseudo-reward $D(\bx)$ tilting the power objective. DEGS does not merely favor uncertain sequences---the ablation (Table~\ref{tab:core-ablation}) confirms both terms are necessary.

\paragraph{Limitations and future work.}
Our evaluation centers on mathematical reasoning, with preliminary coding and scientific-QA results; generalization to open-ended or factual tasks remains open. DEGS also requires hidden-state access (open-weight models only), and the logit lens is an imperfect probe~\citep{wendler2024llamas}. Promising extensions include richer entropy-curve features via learned scorers, token-selective collapse depth, cross-architecture validation, and integration with efficient samplers such as Power-SMC~\citep{azizi2026powersmc}.

\section*{Reproducibility Statement}
All experiments use publicly available open-weight base models (Qwen2.5-7B, Qwen2.5-Math-7B, DeepSeek-Math-7B) and public benchmarks (MATH500, DeepMind500, HumanEval, GPQA-Diamond). The collapse-depth definition (Definitions~\ref{def:per-token-collapse}--\ref{def:sequence-collapse}), the DEGS objective~\eqref{eq:degs-target}, and both decoding variants are specified in full in Section~\ref{sec:method} and Algorithms~\ref{alg:degs-mcmc}--\ref{alg:degs-rerank}. Appendix~\ref{app:implementation} reports the complete harness---per-backbone layer counts and probe grids, the prompt/parser/grader for each benchmark, decoding knobs, the entropy-probing procedure, the GRPO training recipe, and the matched-budget protocol---together with default hyperparameters and search ranges (Table~\ref{tab:hyperparameters-full}) and the per-step MCMC protocol (Table~\ref{tab:mcmc-protocol}); every prompt, parser, and grader is shared verbatim across all decoding methods. Formal statements and complete proofs of all theoretical claims are in Appendix~\ref{app:proofs}.

\section*{Ethics Statement}
This work studies training-free decoding for reasoning on standard public benchmarks using open-weight models, and involves no new data collection or human subjects. It inherits the general risks of LLM reasoning systems (e.g., confidently stated incorrect answers) but, to our knowledge, introduces no additional ethical concerns; because DEGS requires hidden-state access, it applies only to open-weight models.

\clearpage
\appendix

\section{Additional Figures and Tables}
\label{app:additional}

\begin{figure*}[t]
\centering
\setlength{\tabcolsep}{4pt}
\renewcommand{\arraystretch}{1.05}
\begin{tabular}{cccc}
\begin{minipage}[b]{0.235\textwidth}\centering
  \incfig{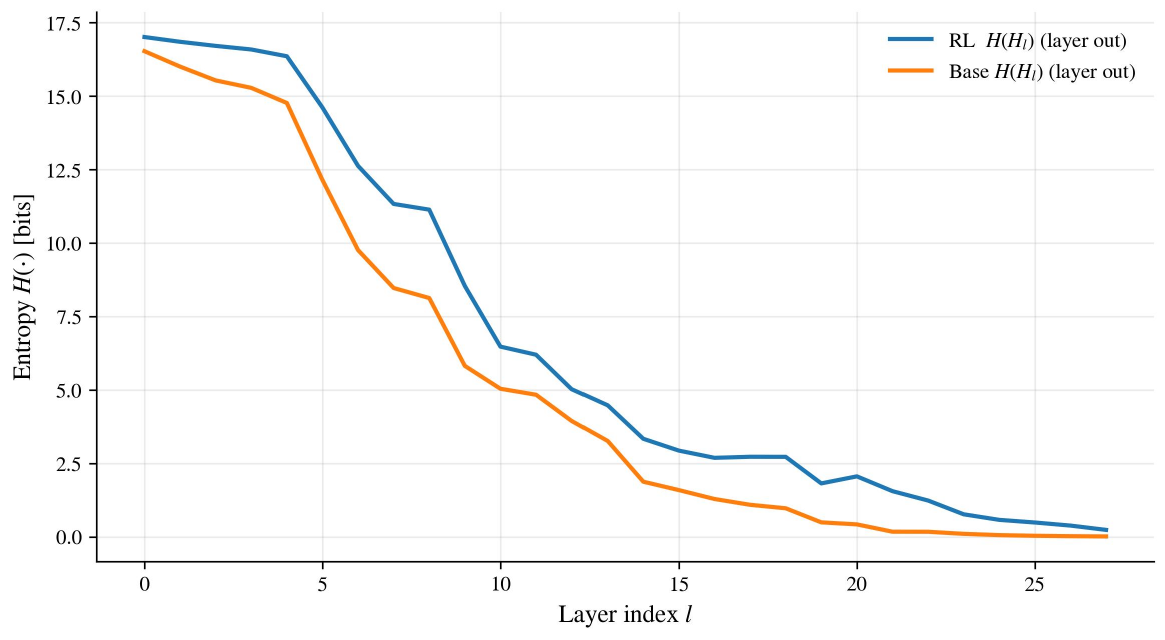}{0.95\linewidth}{1.08in}\\[2pt]
  {\small\textbf{(a)} Qwen2.5-Math-7B\\ MATH500}
\end{minipage}&
\begin{minipage}[b]{0.235\textwidth}\centering
  \incfig{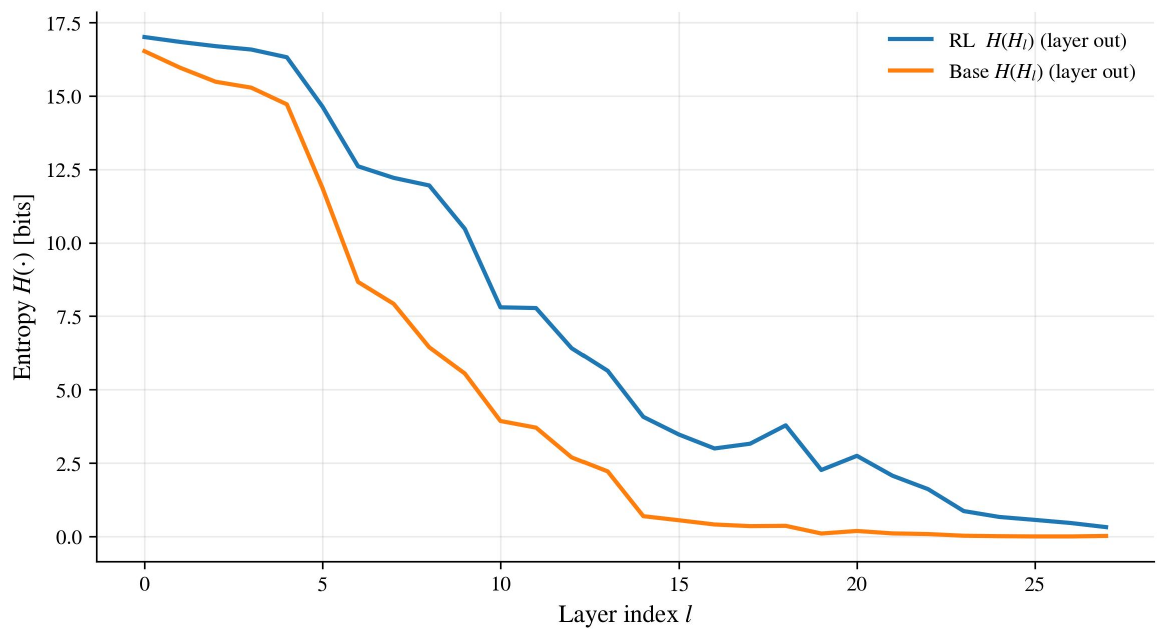}{0.95\linewidth}{1.08in}\\[2pt]
  {\small\textbf{(b)} Qwen2.5-Math-7B\\ DeepMind500}
\end{minipage}&
\begin{minipage}[b]{0.235\textwidth}\centering
  \incfig{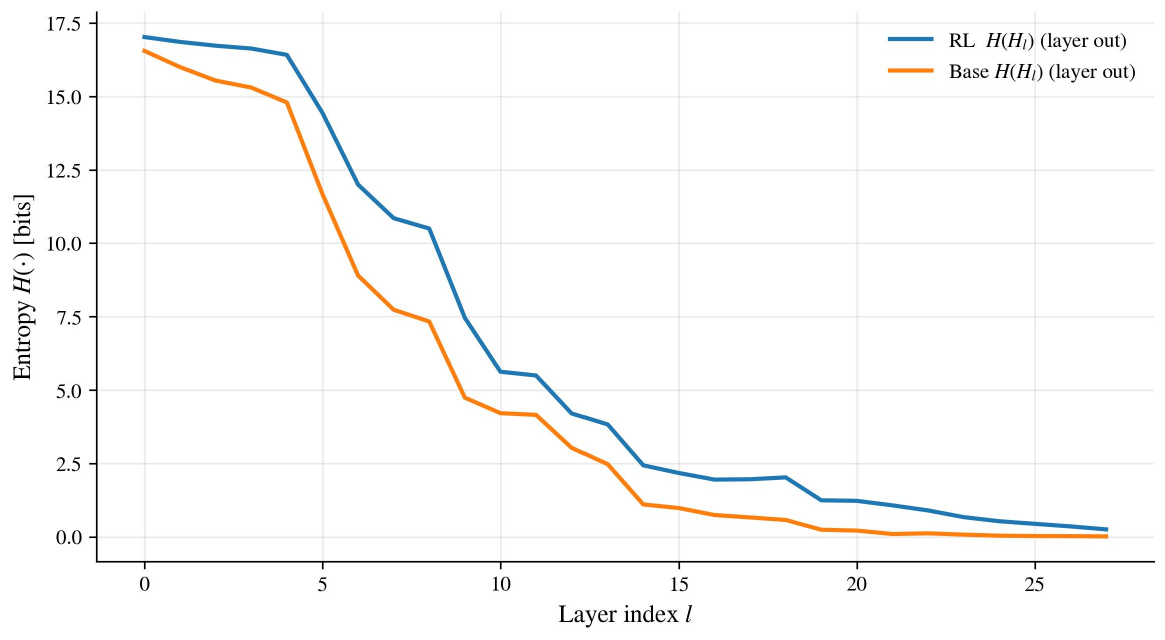}{0.95\linewidth}{1.08in}\\[2pt]
  {\small\textbf{(c)} Qwen2.5-Math-7B\\ GSM8K}
\end{minipage}&
\begin{minipage}[b]{0.235\textwidth}\centering
  \incfig{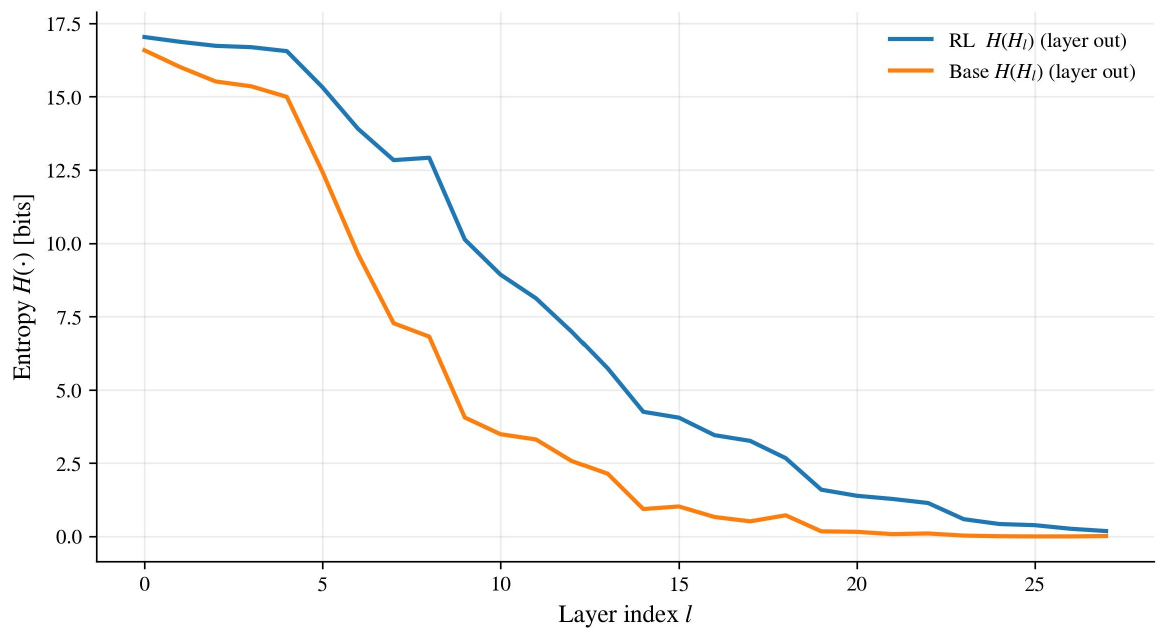}{0.95\linewidth}{1.08in}\\[2pt]
  {\small\textbf{(d)} Qwen2.5-Math-7B\\ GPQA}
\end{minipage}\\[6pt]
\begin{minipage}[b]{0.235\textwidth}\centering
  \incfig{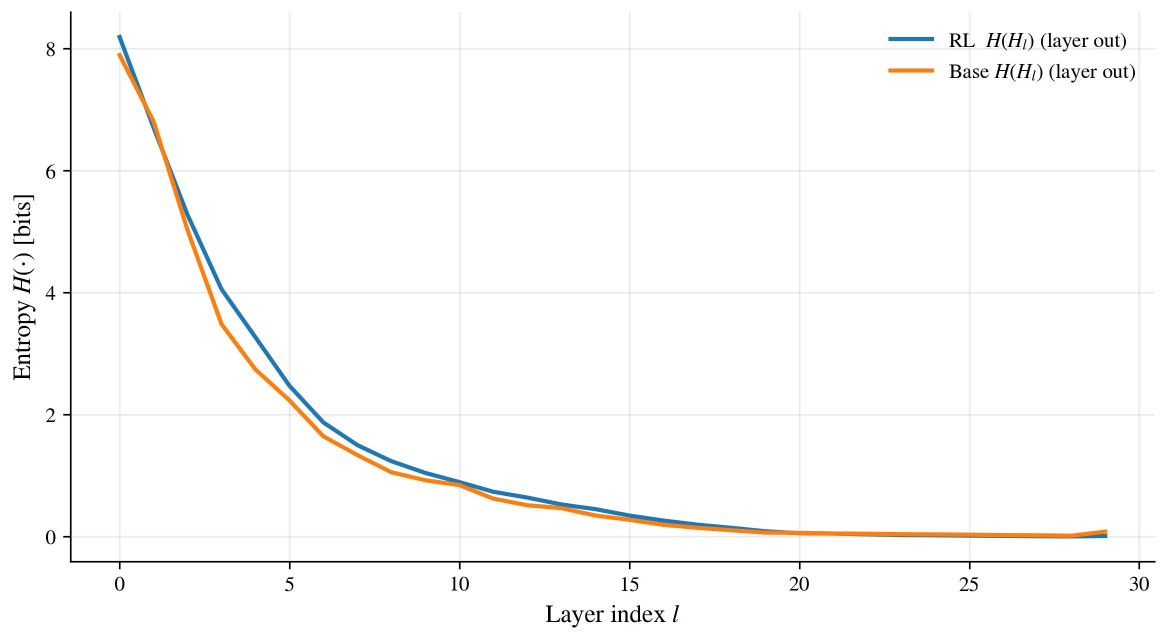}{0.95\linewidth}{1.08in}\\[2pt]
  {\small\textbf{(e)} DeepSeek-Math-7B\\ MATH500}
\end{minipage}&
\begin{minipage}[b]{0.235\textwidth}\centering
  \incfig{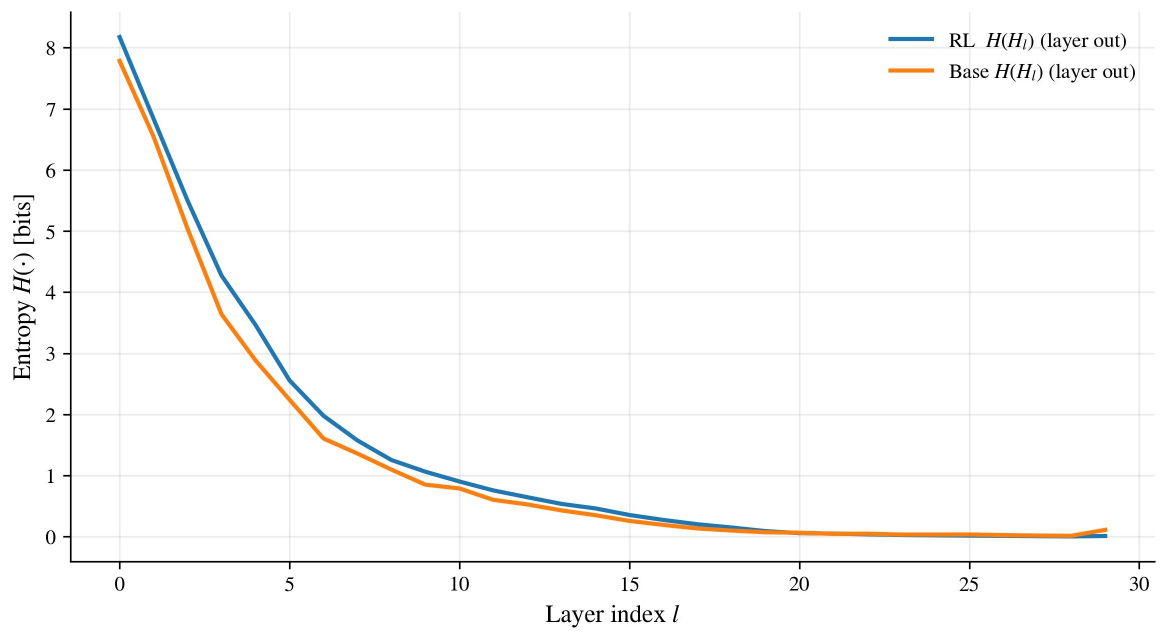}{0.95\linewidth}{1.08in}\\[2pt]
  {\small\textbf{(f)} DeepSeek-Math-7B\\ DeepMind500}
\end{minipage}&
\begin{minipage}[b]{0.235\textwidth}\centering
  \incfig{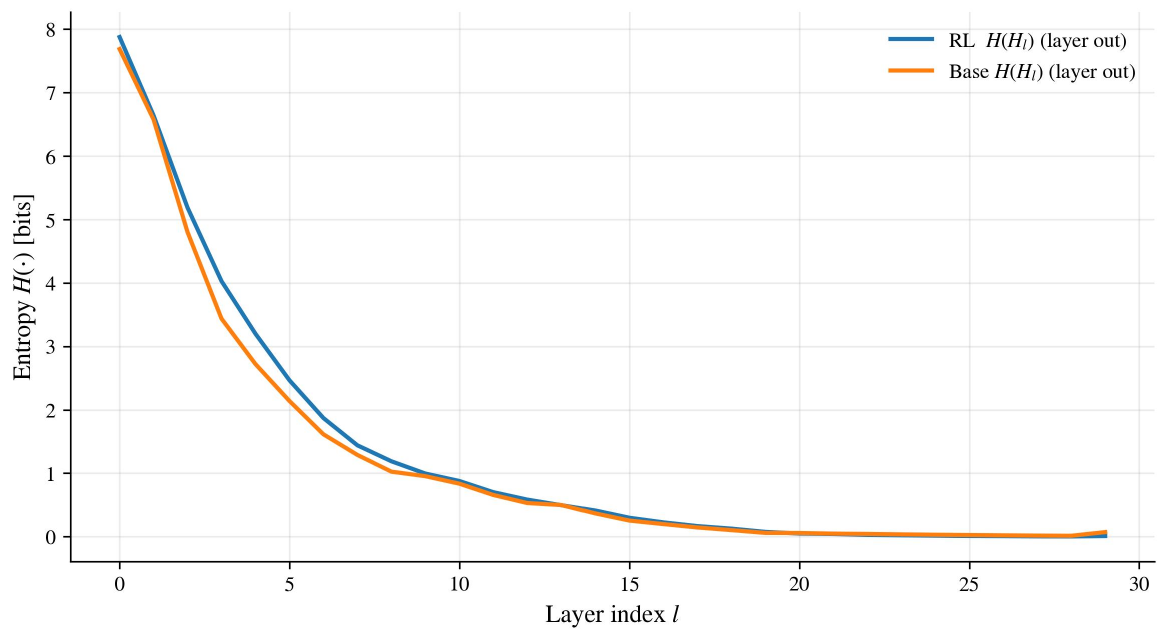}{0.95\linewidth}{1.08in}\\[2pt]
  {\small\textbf{(g)} DeepSeek-Math-7B\\ GSM8K}
\end{minipage}&
\begin{minipage}[b]{0.235\textwidth}\centering
  \incfig{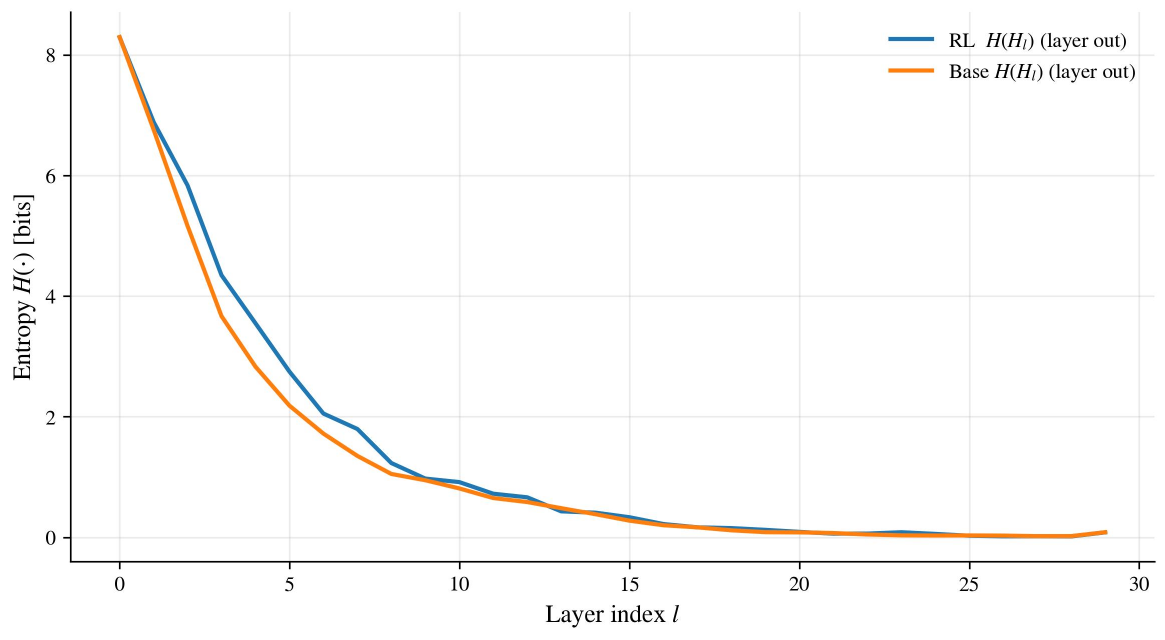}{0.95\linewidth}{1.08in}\\[2pt]
  {\small\textbf{(h)} DeepSeek-Math-7B\\ GPQA}
\end{minipage}\\[6pt]
\begin{minipage}[b]{0.235\textwidth}\centering
  \incfig{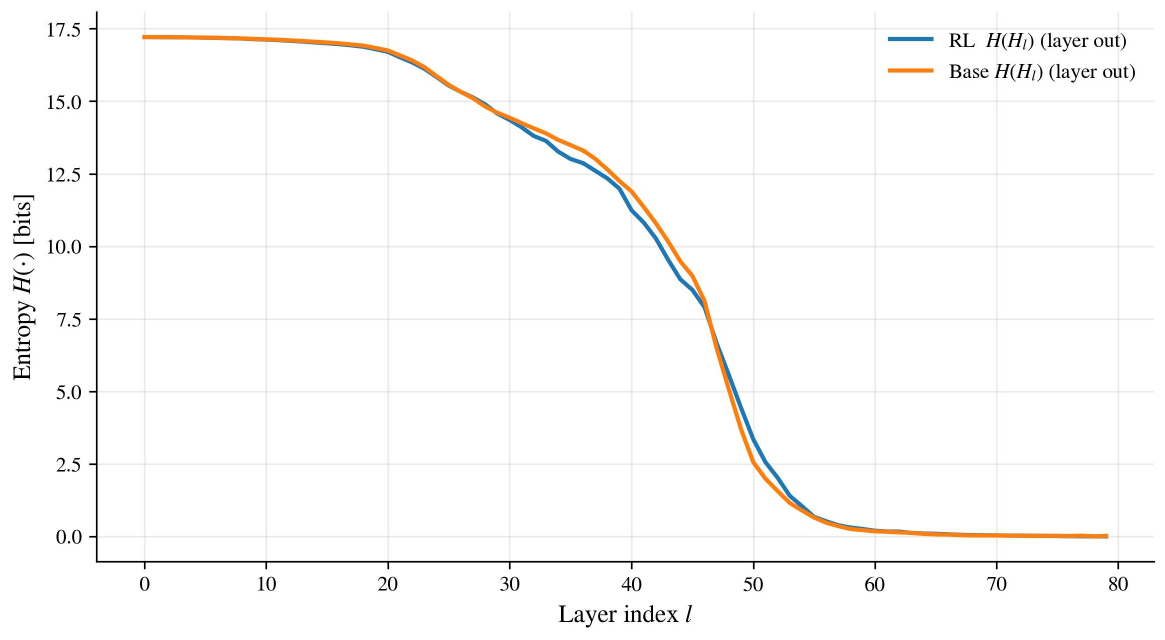}{0.95\linewidth}{1.08in}\\[2pt]
  {\small\textbf{(i)} Qwen2.5-Math-72B\\ MATH500}
\end{minipage}&
\begin{minipage}[b]{0.235\textwidth}\centering
  \incfig{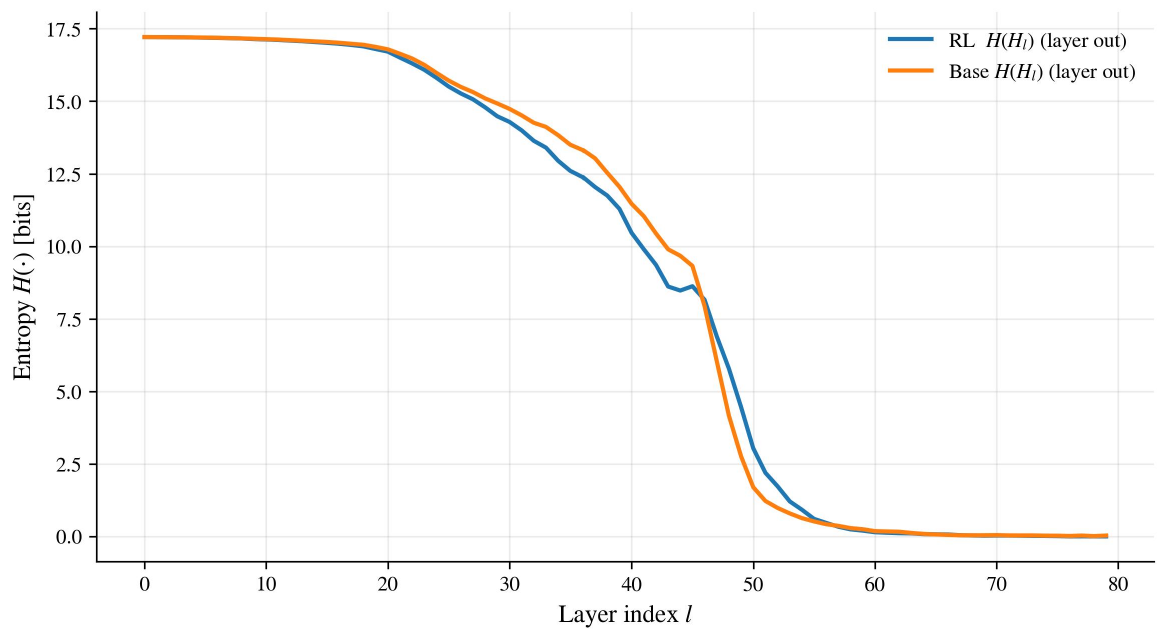}{0.95\linewidth}{1.08in}\\[2pt]
  {\small\textbf{(j)} Qwen2.5-Math-72B\\ DeepMind500}
\end{minipage}&
\begin{minipage}[b]{0.235\textwidth}\centering
  \incfig{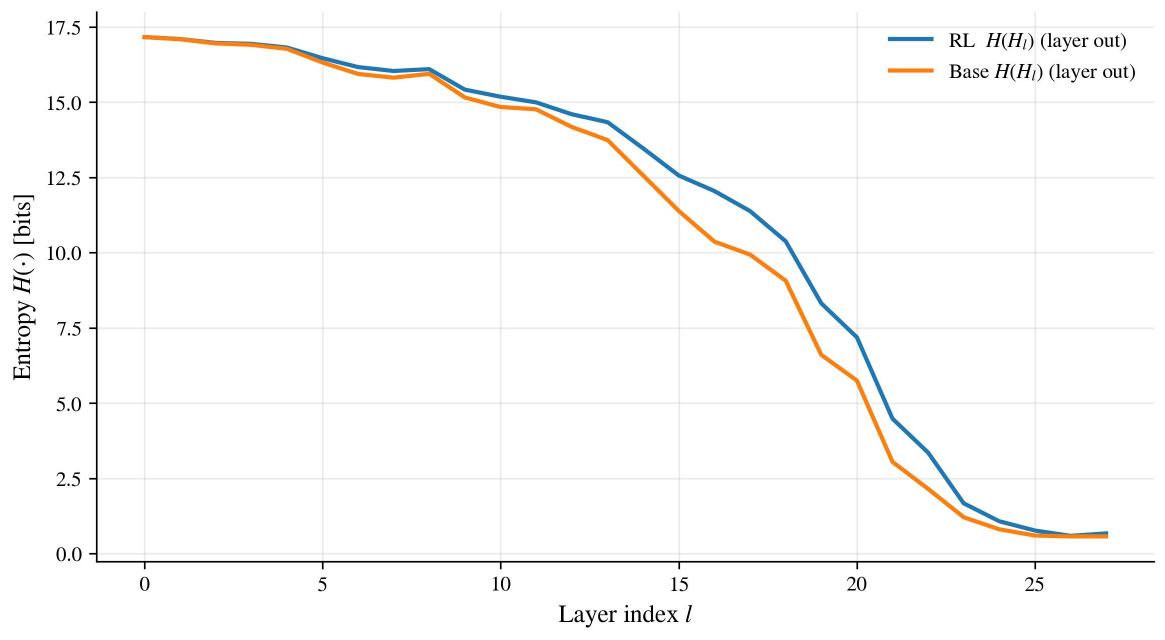}{0.95\linewidth}{1.08in}\\[2pt]
  {\small\textbf{(k)} Qwen2.5-7B\\ MATH500}
\end{minipage}&
\begin{minipage}[b]{0.235\textwidth}\centering
  \incfig{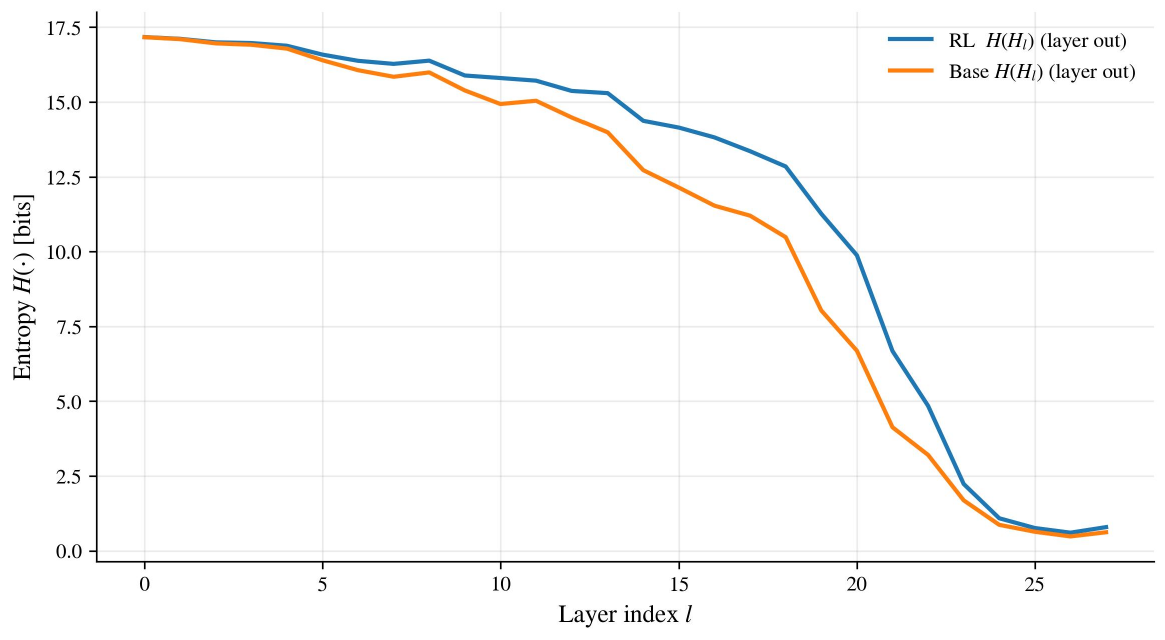}{0.95\linewidth}{1.08in}\\[2pt]
  {\small\textbf{(l)} Qwen2.5-7B\\ DeepMind500}
\end{minipage}
\end{tabular}
\caption{Complete entropy-versus-layer visualizations across all twelve (model, benchmark) pairs. Each panel overlays the layer-wise token entropy $H_l$ in nats for the posttrained variant against its base counterpart, measured on the subset where the posttrained model is correct and the base model is wrong; this isolates the depths at which posttraining adds discriminative content. This appendix figure complements Figure~\ref{fig:late-collapse} by expanding coverage and reducing the possibility that the main-paper examples are selective.}
\label{fig:appendix-entropy-grid}
\end{figure*}

\begin{table*}[t]
\caption{Full benchmark results with all reported settings. Same data as
Table~\ref{tab:main-results}, with the decoding budget reported as
$N{=}\text{candidates}$, $M{=}\text{MCMC steps}$, $T{=}\text{temperature}$.
Power-SMC is included as an additional training-free decoding
baseline~\citep{azizi2026powersmc}.}
\label{tab:appendix-full-results}
\centering
\footnotesize
\setlength{\tabcolsep}{4.0pt}
\resizebox{\textwidth}{!}{
\begin{tabular}{llccccc}
\toprule
Model & Method & Budget & MATH500 $\uparrow$ & DeepMind500 $\uparrow$ & HumanEval Pass@1 $\uparrow$ & GPQA $\uparrow$ \\
\midrule
Qwen2.5-7B & Base                          & $T{=}0$                  & 0.498 & 0.250 & 0.329 & 0.278 \\
Qwen2.5-7B & Low-temperature               & $T{=}0.25$               & 0.628 & 0.342 & 0.524 & 0.303 \\
Qwen2.5-7B & Best-of-$N$                   & $N{=}16$                 & 0.650 & 0.395 & 0.609 & 0.282 \\
Qwen2.5-7B & MCMC Power Sampling           & $N{=}16,M{=}10$          & 0.706 & 0.537 & 0.622 & 0.318 \\
Qwen2.5-7B & Scalable Power Sampling       & ($T{=}0.25$)             & 0.708 & 0.549 & 0.756 & 0.349 \\
Qwen2.5-7B & Power-SMC                     & $N{=}16$                 & 0.714 & 0.563 & 0.761 & 0.351 \\
Qwen2.5-7B & Best-of-$N$-Entropy           & $N{=}16$                 & 0.664 & 0.412 & 0.632 & 0.348 \\
Qwen2.5-7B & DEGS-MCMC                     & $N{=}16,M{=}10,\beta{=}5$ & 0.728 & 0.573 & 0.769 & 0.393 \\
Qwen2.5-7B & GRPO (MATH)                   & $T{=}0$                  & 0.740 & 0.598 & 0.561 & 0.354 \\
\midrule
Qwen2.5-Math-7B & Base                          & $T{=}0$                  & 0.496 & 0.332 & 0.329 & 0.278 \\
Qwen2.5-Math-7B & Low-temperature               & $T{=}0.25$               & 0.690 & 0.482 & 0.512 & 0.353 \\
Qwen2.5-Math-7B & Best-of-$N$                   & $N{=}16$                 & 0.684 & 0.465 & 0.512 & 0.343 \\
Qwen2.5-Math-7B & MCMC Power Sampling           & $N{=}16,M{=}10$          & 0.748 & 0.574 & 0.573 & 0.389 \\
Qwen2.5-Math-7B & Scalable Power Sampling       & ($T{=}0.25$)             & 0.758 & 0.593 & 0.604 & 0.409 \\
Qwen2.5-Math-7B & Power-SMC                     & $N{=}16$                 & 0.762 & 0.601 & 0.612 & 0.413 \\
Qwen2.5-Math-7B & Best-of-$N$-Entropy           & $N{=}16$                 & 0.687 & 0.479 & 0.537 & 0.369 \\
Qwen2.5-Math-7B & DEGS-MCMC                     & $N{=}16,M{=}10,\beta{=}5$ & 0.767 & 0.619 & 0.618 & 0.426 \\
Qwen2.5-Math-7B & GRPO (MATH)                   & $T{=}0$                  & 0.785 & 0.632 & 0.537 & 0.399 \\
\midrule
DeepSeek-Math-7B & Base                          & $T{=}0$                  & 0.362 & 0.212 & 0.415 & 0.333 \\
DeepSeek-Math-7B & Low-temperature               & $T{=}0.25$               & 0.366 & 0.269 & 0.427 & 0.430 \\
DeepSeek-Math-7B & Best-of-$N$                   & $N{=}16$                 & 0.420 & 0.276 & 0.433 & 0.338 \\
DeepSeek-Math-7B & MCMC Power Sampling           & $N{=}16,M{=}10$          & 0.424 & 0.307 & 0.470 & 0.345 \\
DeepSeek-Math-7B & Scalable Power Sampling       & ($T{=}0.25$)             & 0.464 & 0.314 & 0.487 & 0.364 \\
DeepSeek-Math-7B & Power-SMC                     & $N{=}16$                 & 0.467 & 0.329 & 0.498 & 0.386 \\
DeepSeek-Math-7B & Best-of-$N$-Entropy           & $N{=}16$                 & 0.423 & 0.298 & 0.453 & 0.341 \\
DeepSeek-Math-7B & DEGS-MCMC                     & $N{=}16,M{=}10,\beta{=}5$ & 0.475 & 0.336 & 0.508 & 0.392 \\
DeepSeek-Math-7B & GRPO (MATH)                   & $T{=}0$                  & 0.492 & 0.352 & 0.524 & 0.333 \\
\bottomrule
\end{tabular}
}
\end{table*}

\begin{table*}[t]
\caption{DEGS-MCMC protocol details: the practical setup of the entropy-guided MCMC variant, including proposal construction, chain length, acceptance computation, and stopping rules.}
\label{tab:mcmc-protocol}
\centering\small\setlength{\tabcolsep}{4.6pt}
\resizebox{\textwidth}{!}{
\begin{tabular}{l p{10.3cm}}
\toprule
Component & Specification \\
\midrule
Target distribution & $\pi(\bx) \propto p(\bx)^{\alpha}\exp(\beta D(\bx))$, with $\alpha{=}1/\tau_{\mathrm{prop}}$ (default $\alpha{=}4$, $\tau_{\mathrm{prop}}{=}0.25$) and $\beta{\ge}0$ (default $\beta{=}5$; $\beta{=}0$ recovers plain MCMC power sampling) \\
Proposal family & Local suffix resampling: at each MH step, draw a cut index $i$ and regenerate all tokens from $i$ to the end of the current completion with the base model at temperature $\tau_{\mathrm{prop}}$ (multinomial, no top-$k$/top-$p$) \\
Proposal granularity & Token-level from a random cut point; the resampled span length is implicitly variable and is corrected for in the MH ratio \\
Editable span & Only the generated completion $[|\mathrm{prompt}|,|\mathrm{gen}|)$ is editable; the prompt is frozen, so the cut index is drawn uniformly from $\{|\mathrm{prompt}|,\dots,|\mathrm{gen}|{-}1\}$ \\
Initialization & Block-wise warm start: the chain is built incrementally over $B{=}16$ blocks of $192$ tokens each, with each block sampled at temperature $\tau_{\mathrm{prop}}$ before MH steps run \\
Chain length $T_{\mathrm{MCMC}}$ & $T_{\mathrm{MCMC}}{=}10$ MH steps per block, $B{=}16$ blocks; total $160$ proposals per sample at default \\
Burn-in / warm start & No explicit burn-in: block-wise warm start already places the chain in a high-likelihood region \\
Acceptance ratio & $\log r = \alpha\bigl(\log p(\bx')-\log p(\bx)\bigr)+\bigl(\log q(\bx\mid \bx')-\log q(\bx'\mid \bx)\bigr)+\beta\bigl(D(\bx')-D(\bx)\bigr)$; accept with probability $\min(1,e^{\log r})$. The DEGS correction $\beta(D(\bx')-D(\bx))$ is additive and active only when $\beta{>}0$. Measured acceptance at the canonical setting ($\alpha{=}4$, $\beta{=}5$): deepseek\_math/deepmind: 78\%; qwen\_math/math: 57\%. \\
Likelihood evaluation & Single teacher-forced forward per proposal; base logprobs are summed over the completion without length normalization \\
Entropy probing for $D(\bx)$ & Separate no-gradient forward with hidden-state hooks on $\Lsub$; at each layer the logit lens is applied after final RMSNorm, and exact token entropy is computed in vocab chunks of $8192$; $D(\bx)$ is the mean normalized depth over generated (non-EOS) tokens \\
Caching / reuse strategy & Current state's $D(\bx)$ is cached and refreshed only when a new block is appended or a proposal is accepted; each proposal recomputes $D(\bx')$ from scratch \\
Early stopping & Chain terminates when EOS appears; otherwise the full $B{\cdot}T_{\mathrm{MCMC}}$ proposals are run \\
Selection from chain & Final chain state is returned (no chain averaging or best-so-far selection) \\
Budget normalization & Reported in forward-equivalent units as defined in Table~\ref{tab:compute-accounting} \\
\bottomrule
\end{tabular}
}
\end{table*}

\section{Preliminaries (Full)}
\label{sec:preliminaries-full}

\paragraph{Autoregressive language models.}
Let $\mathcal{X}$ denote a finite vocabulary. An autoregressive LLM defines $p(\bx) = \prod_{t=0}^{T} p(x_t \mid x_{<t})$. In a transformer with $L$ layers, hidden states $\mathbf{h}_{l,t} \in \mathbb{R}^d$ are updated via $\mathbf{h}_{l,t} = \mathbf{h}_{l-1,t} + f_l(\mathbf{h}_{l-1,1}, \ldots, \mathbf{h}_{l-1,t})$, where $f_l$ is the $l$-th transformer block. After layer $L$, the unembedding matrix $U \in \mathbb{R}^{|\mathcal{X}| \times d}$ produces next-token logits.

\paragraph{Logit lens.}
The logit lens~\citep{nostalgebraist2020logitlens} applies $U$ at intermediate layers: $p_l(v \mid x_{\le t}) = \mathrm{softmax}(U\mathbf{h}_{l,t})_v$. The layer-wise entropy is $H_l(t) = -\sum_v p_l(v \mid x_{\le t}) \log p_l(v \mid x_{\le t})$. Prior work shows entropy is high in early layers and drops sharply in later layers~\citep{wendler2024llamas}.

\paragraph{Power distributions.}
The power distribution $p^\alpha(\bx) \propto p(\bx)^\alpha$ ($\alpha \ge 1$) sharpens the base distribution. \citet{karan2025reasoning} show that $p^\alpha$ differs from low-temperature sampling and propose MH sampling to target it.

\section{DEGS Reranking Algorithm}
\label{app:rerank-algorithm}

Algorithm~\ref{alg:degs-rerank} gives the depth-based reranking procedure referenced in Section~\ref{sec:degs-reranking}. Given a problem prompt, sample $N$ candidate solutions $\{\bx^{(i)}\}_{i=1}^N$ from the base model $p$ using moderate randomness (temperature $\in[0.7,1.0]$, top-$p=0.95$), recording the per-token log-probabilities. For each candidate, compute the log-likelihood from the recorded log-probs and the collapse depth $D(\bx^{(i)})$ from one additional teacher-forced forward pass (logit-lens entropy at layers $\Lsub$), then the combined score $S^{(i)}=\alpha\log p(\bx^{(i)})+\beta D(\bx^{(i)})$, and return $\bx^*=\argmax_i S^{(i)}$. As discussed in Section~\ref{sec:degs-reranking}, at $N{=}16$ this is not a competitive standalone decoder (Proposition~\ref{prop:bounded-perturbation}); we retain it as a controlled probe of \emph{where} in depth the collapse signal carries discriminative content (Table~\ref{tab:lsub-tradeoff}).

\begin{algorithm}[h]
\caption{DEGS Reranking (Best-of-$N$)}
\label{alg:degs-rerank}
\begin{algorithmic}[1]
\REQUIRE Base model $p$; prompt; candidate budget $N$; layer subset $\Lsub$; threshold $\tau$; weights $\alpha, \beta$
\ENSURE Selected solution $\bx^*$
\STATE Sample $N$ candidates $\{\bx^{(i)}\}_{i=1}^N \sim p(\cdot \mid \text{prompt})$; record log-probs
\FOR{$i = 1, \ldots, N$}
  \STATE $\log p(\bx^{(i)}) \gets \sum_{t} \log p(x_t^{(i)} \mid x_{<t}^{(i)})$ \hfill \COMMENT{from sampling}
  \STATE Run forward pass on prompt $\oplus\; \bx^{(i)}$ with hidden states at layers $\Lsub$
  \FOR{each layer $l \in \Lsub$ and position $t$}
    \STATE $H_l(t) \gets -\sum_{v} p_l(v \mid x_{\le t}^{(i)}) \log p_l(v \mid x_{\le t}^{(i)})$ \hfill \COMMENT{logit-lens entropy}
  \ENDFOR
  \FOR{each position $t$}
    \STATE $d_t \gets \min\{l \in \Lsub : H_l(t) \le \tau\}$ \quad (default $L$ if none)
  \ENDFOR
  \STATE $D(\bx^{(i)}) \gets \frac{1}{T^{(i)}+1}\sum_{t} d_t / L$
  \STATE $S^{(i)} \gets \alpha \cdot \log p(\bx^{(i)}) + \beta \cdot D(\bx^{(i)})$
\ENDFOR
\RETURN $\bx^* \gets \argmax_i S^{(i)}$
\end{algorithmic}
\end{algorithm}

\section{Design Choices and Practical Considerations}
\label{app:design-choices}

\paragraph{Relationship to existing signals.}
DEGS differs from final-layer entropy reranking in that it captures the \emph{depth profile} of entropy evolution, not merely its endpoint. A candidate may have low final-layer entropy yet collapse early, reflecting superficial rather than deep reasoning. Table~\ref{tab:core-ablation} isolates the contribution of each component.

\section{Implementation Details}
\label{app:implementation}

This appendix records the decoding, probing, training, and evaluation
configuration behind Section~\ref{sec:experiments}. The tables in this section
are referenced from the main text; the prose below specifies what the tables do
not, so that every reported number is reproducible from a single fixed harness.

\paragraph{Base models.}
All experiments use three open-weight base checkpoints: Qwen2.5-7B and
Qwen2.5-Math-7B, each with $L{=}28$ transformer layers, and DeepSeek-Math-7B
with $L{=}30$ layers. Because collapse depth is read off a fixed grid of layer
indices, the default probe $\Lsub$ is rescaled to each backbone's depth---
$\{4,8,12,16,20,24,28\}$ for the two $28$-layer Qwen models and
$\{4,9,14,18,22,26,30\}$ for the $30$-layer DeepSeek model---so that the seven
probed layers occupy comparable \emph{relative} depths $l/L$ across
architectures rather than a fixed absolute index.

\paragraph{Benchmarks, prompts, and grading.}
Table~\ref{tab:datasets-eval-full} gives the output format and answer-processing pipeline for each benchmark; the governing principle is that the harness is held fixed while only the decoder varies. MATH500 and DeepMind500 share an identical zero-shot chain-of-thought prompt, parser, and grader, with the final answer read from the last \texttt{\textbackslash boxed\{\}} span and scored by a sympy-based symbolic/numeric matcher; the two splits therefore differ only in problem difficulty, which is what licenses reading the larger DEGS gains on DeepMind500 (Section~\ref{sec:main-results}) as a difficulty effect rather than a harness artifact. GPQA (Diamond) is posed as a four-way multiple-choice task whose option order is shuffled once and held fixed across methods---so every decoder is scored against the same gold position---with the predicted letter read from the boxed answer, matched exactly against the gold letter, and completions capped at $3072$ tokens. HumanEval is generated by direct completion from the function signature, without chain-of-thought, and graded for functional correctness (Pass@1) by the official test harness. Every prompt template, parser, and grader is shared verbatim across all decoding methods and all models, so that accuracy differences between rows of Table~\ref{tab:main-results} isolate the decoder.

\paragraph{Decoding configuration.}
Each method samples from the same base model under a shared set of decoding
knobs. We disable nucleus~\citep{holtzman2020nucleus} and top-$k$ truncation throughout (top-$p{=}1.0$): the
power-sampling family targets the untruncated base distribution, and matching
this across baselines keeps the comparison on equal footing. The likelihood
exponent is realized as an inverse sampling temperature ($\alpha{=}1/T$), so the
default $\alpha{=}4$ corresponds to $T{=}0.25$ on MATH500, DeepMind500, and
GPQA, whereas HumanEval is decoded at $T{=}0.5$ ($\alpha{=}2$) to retain the
diversity program synthesis requires. The reranking methods (Best-of-$N$,
Best-of-$N$-Entropy, and DEGS reranking) draw $N{=}16$ candidates and
return one; the MCMC family (power sampling and DEGS-MCMC) runs
$T_{\mathrm{MCMC}}{=}10$ Metropolis--Hastings steps in each of $B{=}16$ blocks of
$192$ tokens, i.e.\ $160$ proposals per sample. All DEGS variants use depth
weight $\beta{=}5$ and their likelihood-only counterparts $\beta{=}0$, so that
any DEGS-versus-baseline difference is attributable solely to the depth-entropy
term. Defaults and search ranges are collected in
Table~\ref{tab:hyperparameters-full}.

\paragraph{Collapse-depth probing.}
Scoring a completion with $\Dx$ adds one teacher-forced forward pass that caches
hidden states at the layers in $\Lsub$. At each probed layer the model's final
RMSNorm is applied before the logit lens---matching the normalization used at
the output head---and exact token entropy is computed over the full vocabulary
in chunks of $8192$ entries to bound peak memory. The per-token collapse depth
$d_t$ is the shallowest probed layer whose logit-lens entropy is at or below
$\tau{=}0.25$ nats ($d_t{=}L$ if none qualifies), and $\Dx$ averages $d_t/L$ over
all generated tokens, excluding EOS and scoring the full completion span. In
DEGS-MCMC the current state's $\Dx$ is cached and refreshed only when a block is
appended or a proposal is accepted, whereas every proposal recomputes
$D(\bx')$ from scratch; the complete per-step protocol, including the acceptance
ratio and stopping rule, is given in Table~\ref{tab:mcmc-protocol}.

\paragraph{Predictive-validity sampling.}
The analysis of whether $\Dx$ predicts correctness
(Figure~\ref{fig:collapse-depth-predicts-correctness}) deliberately uses a
higher-entropy regime than the decoding runs---$N{=}16$ candidates per problem at
$T{=}0.8$, top-$p{=}0.9$---so that the candidate pool spans a wide range of $\Dx$
and the correlation is not estimated on a near-degenerate set of near-identical
samples.

\paragraph{GRPO reference.}
Following \citet{karan2025reasoning}, our GRPO references are produced rather than quoted: we posttrain each base model with GRPO~\citep{shao2024deepseekmath} and evaluate the resulting policy through the same harness as every other method. The training setup mirrors theirs---we adopt the GRPO implementation of \citet{shao2025spurious} with its default hyperparameters, optimize against a ground-truth math verifier on the MATH training split, and use a group size of $16$ rollouts per prompt. For the two Qwen backbones this configuration coincides with \citet{karan2025reasoning}; DeepSeek-Math-7B is outside their study, so its GRPO reference is trained by us under the same recipe. Decoding is greedy throughout, so that GRPO functions as an in-domain reference obtained under matched conditions.

\paragraph{Sharding and compute.}
Generation is distributed over a heterogeneous GPU cluster of RTX~5880 and
RTX~3090 nodes, one worker per GPU; the shards---five for MATH500 and
DeepMind500, six for GPQA, and four for HumanEval---are a throughput split over
the problem set only. Hardware is split by experiment: the main accuracy results
are measured on the RTX~5880 nodes, whereas the compute-efficiency sweep behind
Table~\ref{tab:lsub-tradeoff} and Table~\ref{tab:compute-accounting} runs on a
single RTX~3090---a difference that moves wall-clock alone and not accuracy,
since the decoder and harness are identical across nodes. Test-time compute is
normalized to forward-equivalent units as defined in
Table~\ref{tab:compute-accounting}, under which each reranking and MCMC method is
matched against its same-family baseline; the wall-clock figures in that table
are per-problem means on Qwen2.5-Math-7B / MATH500, and the corresponding
per-problem mean on GPQA under the MCMC setting is approximately $0.63$\,ks.

\begin{table*}[t]
\caption{Benchmarks, metrics, and evaluation setup, with output format and answer-processing details for each benchmark.}
\label{tab:datasets-eval-full}
\centering\small\setlength{\tabcolsep}{4.2pt}
\resizebox{\textwidth}{!}{
\begin{tabular}{lccccp{5.2cm}}
\toprule
Benchmark & Task type & Size & Primary metric & Output format & Evaluation / answer-processing setup \\
\midrule
MATH500 & Math reasoning & 500 & Accuracy & CoT + \texttt{\textbackslash boxed\{\}} & Zero-shot CoT prompt; answer parsed from last \texttt{\textbackslash boxed\{\}} span; graded by sympy-based symbolic/numeric matcher; temperature $0.25$ \\
DeepMind500 & Math reasoning & 500 & Accuracy & CoT + \texttt{\textbackslash boxed\{\}} & Held-out subset we construct from the DeepMind mathematics dataset; identical prompt, parser, and grader as MATH500; temperature $0.25$ \\
HumanEval & Code generation & 164 & Pass@1 & Python function & Direct completion from function signature (no CoT); code extracted via regex; graded by official functional-correctness harness (per-test timeout $3$\,s); temperature $0.5$ \\
GPQA (Diamond) & Scientific QA & 198 & Accuracy & MC (A/B/C/D) + CoT & Zero-shot CoT MC prompt; letter extracted from final \texttt{\textbackslash boxed\{\}} span; temperature $0.25$ \\
\bottomrule
\end{tabular}
}
\end{table*}

\begin{table*}[t]
\caption{Compute accounting and matched-budget protocol: how test-time compute is normalized across methods. All MCMC-family methods use $B{=}16$ blocks with $T_{\mathrm{MCMC}}{=}10$ MH steps per block ($160$ total proposals), and all layer-subset probing is charged as $K/L$ of a full forward pass per probed candidate (default $K{=}7$, $L{\in}\{28,30\}$). The wall-clock column reports per-problem mean elapsed seconds on Qwen2.5-Math-7B / MATH500 measured during the compute-efficiency sweep reported in Table~\ref{tab:main-results}.}
\label{tab:compute-accounting}
\centering\small\setlength{\tabcolsep}{4.0pt}
\resizebox{\textwidth}{!}{
\begin{tabular}{lccccccc}
\toprule
Method & Candidate budget & Extra forward passes & Hidden-state probing & MCMC steps & Budget unit & Matched-budget note & Wall-clock \\
\midrule
Base decoding & $1$ sample & $0$ & No & $0$ & $1{\times}$ decode pass & Single-sample reference & -- \\
Low-temperature decoding & $N{=}16$ samples & $0$ & No & $0$ & $N{\times}$ decode passes & Matched by candidate count & -- \\
Best-of-$N$ (log-likelihood) & $N{=}16$ samples & $0$ & No & $0$ & $N$ decode passes & Same candidate set as DEGS rerank & 228.1\,s \\
Best-of-$N$-Entropy & $N{=}16$ samples & $N$ & Yes (final layer) & $0$ & $N{+}N$ forward-equivalents & Same candidate pool and rerank budget & 230.5\,s \\
MCMC Power Sampling / SPS & $B{\cdot}T_{\mathrm{MCMC}}{=}160$ proposals & $0$ & No & $T_{\mathrm{MCMC}}{=}10$/block & Total proposal forwards & Matched by forward-equivalent cost & 330.5\,s \\
Power-SMC & Particle-dependent & $0$ & No & $T_{\mathrm{MCMC}}{=}10$/block & Total proposal forwards & Matched against MCMC family & -- \\
DEGS reranking & $N{=}16$ samples & $N$ ($\Lsub$) & Yes ($|\Lsub|{=}7$) & $0$ & $N{+}N{\cdot}(K/L)$ fwd-eq. & Matched against rerank baselines & 235.6\,s \\
DEGS-MCMC & $B{\cdot}T_{\mathrm{MCMC}}{=}160$ proposals & $1$ probe/MH step ($\Lsub$) & Yes ($|\Lsub|{=}7$) & $T_{\mathrm{MCMC}}{=}10$/block & Proposals $+\,B{\cdot}T_{\mathrm{MCMC}}{\cdot}(K/L)$ probes & Matched against MCMC family & 359.5\,s \\
\bottomrule
\end{tabular}
}
\end{table*}

\begin{table}[t]
\caption{Accuracy--overhead trade-off for the layer subset $\Lsub$: representative layer-subset choices for entropy probing with their measured wall-clock cost (per-problem mean, averaged over both (model, benchmark) evaluation settings used in Table~\ref{tab:compute-accounting}) and downstream accuracy. The DeepMind500 column is reported on DeepSeek-Math-7B; see Table~\ref{tab:compute-accounting} for the matched-budget protocol.}
\label{tab:lsub-tradeoff}
\centering
\small
\setlength{\tabcolsep}{4.0pt}
\begin{tabular}{lccccc}
\toprule
$\Lsub$ & $K$ & Relative overhead & MATH500 $\uparrow$ & GPQA $\uparrow$ & DeepMind500 $\uparrow$ \\
\midrule
All layers $\{1,\dots,L\}$ & 28 & 1.05$\times$ & 0.771 & 0.426 & 0.339 \\
$\{4,8,12,16,20,24,28\}$ & 7 & 1.00$\times$ & 0.767 & 0.426 & 0.336 \\
$\{8,12,16,20,24,28\}$ & 6 & 0.98$\times$ & 0.767 & 0.426 & 0.336 \\
$\{12,16,20,24,28\}$ & 5 & 0.98$\times$ & 0.767 & 0.426 & 0.336 \\
$\{16,20,24,28\}$ & 4 & 0.99$\times$ & 0.766 & 0.426 & 0.334 \\
$\{12,16,20,24\}$ & 4 & 0.98$\times$ & 0.765 & 0.424 & 0.334 \\
$\{8,12,16,20\}$ & 4 & 0.99$\times$ & 0.762 & 0.421 & 0.321 \\
$\{4,8,12,16\}$ & 4 & 0.99$\times$ & 0.759 & 0.418 & 0.320 \\
$\{1,4,8,12\}$ & 4 & 0.98$\times$ & 0.746 & 0.403 & 0.306 \\
$\{1,2,3,4\}$ & 4 & 0.98$\times$ & 0.728 & 0.385 & 0.275 \\
\bottomrule
\end{tabular}
\end{table}

\begin{table}[t]
\caption{Full DEGS hyperparameters and search ranges. The likelihood exponent $\alpha$ is implemented as $1/\mathrm{temperature}$ inside the power-sampling objective.}
\label{tab:hyperparameters-full}
\centering\small\setlength{\tabcolsep}{3.5pt}
\resizebox{\textwidth}{!}{
\begin{tabular}{lll}
\toprule
Parameter & Default & Search range \\
\midrule
Likelihood exponent $\alpha$ & $4$ & $\{2,\,4,\,6,\,8\}$ \\
Depth weight $\beta$ & $5$ & $\{0,\,1,\,2,\,5,\,10\}$ \\
Entropy threshold $\tau$ (nats) & $0.25$ & $\{0.05,\,0.10,\,\ldots,\,0.50\}$ \\
$\Lsub$ (Qwen, $L{=}28$) & $\{4,8,12,16,20,24,28\}$ & $K\!\in\!\{4,5,6,7\}$ + shifted \\
$\Lsub$ (DeepSeek, $L{=}30$) & $\{4,9,14,18,22,26,30\}$ & Depth-matched rescaling \\
Candidates $N$ & $16$ & $\{4,\,8,\,16,\,32\}$ \\
MCMC blocks $B$ & $16$ & fixed \\
MCMC steps/block $T_{\mathrm{MCMC}}$ & $10$ & $\{1,2,5,10,20,40\}$ \\
Block length & $192$ tokens & fixed \\
Sampling temperature & $0.25$; $0.5$ (HumanEval) & $\alpha{=}1/T$ \\
Top-$p$ & $1.0$ & fixed \\
Scoring span & Completion (excl.\ prompt) & \{full, compl., reasoning, answer, numeric\} \\
Apply final RMSNorm & Yes & fixed \\
Exclude EOS from entropy avg. & Yes & fixed \\
Vocab chunk size & $8192$ & fixed \\
\bottomrule
\end{tabular}
}
\end{table}

\section{Proofs and Derivations}
\label{app:proofs}

This appendix collects formal statements and proofs for the claims used in the main text. Throughout, $p$ is the base model, $\Dx\in(0,1]$ is the normalized collapse depth (Definition~\ref{def:sequence-collapse}), and $\pi(\bx)\propto p(\bx)^\alpha\exp(\beta\Dx)$ is the DEGS target (Definition~\ref{def:degs-target}) with $\alpha\ge 1$, $\beta\ge 0$.

\subsection{Normalizability of the DEGS target}

\begin{proposition}[Normalizability]\label{prop:normalizable}
Let the support $\mathcal{S}=\{\bx : p(\bx)>0\}$ be finite (e.g.\ sequences of bounded length over a finite vocabulary). Then $Z=\sum_{\bx\in\mathcal{S}} p(\bx)^\alpha \exp(\beta\Dx)$ satisfies $0<Z<\infty$, so $\pi(\bx)=p(\bx)^\alpha\exp(\beta\Dx)/Z$ is a well-defined probability distribution.
\end{proposition}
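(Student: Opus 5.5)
The argument bounds each summand of $Z$ from below and above, then uses finiteness of $\mathcal{S}$ to turn these termwise bounds into $0<Z<\infty$. The first step fixes the range of the depth term. By Definition~\ref{def:per-token-collapse}, each $d_t$ lies in $\Lsub\cup\{L\}\subset\{1,\dots,L\}$, so $d_t/L\in[1/L,1]$. By Definition~\ref{def:sequence-collapse}, $\Dx$ is an average of such quantities, so $\Dx\in[1/L,1]\subset(0,1]$ for every $\bx$. Since $\beta\ge 0$, this gives $1\le \exp(\beta\Dx)\le e^{\beta}$.

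The second step handles the likelihood factor. For $\bx\in\mathcal{S}$ we have $0<p(\bx)\le 1$, and since $\alpha\ge 1>0$, also $0<p(\bx)^\alpha\le 1$. Each summand $p(\bx)^\alpha\exp(\beta\Dx)$ is therefore strictly positive and at most $e^{\beta}$.

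The third step sums over the support. The upper bound follows directly:
\[
Z\le |\mathcal{S}|\,e^{\beta}<\infty .
\]
For the lower bound, $\mathcal{S}$ is nonempty because $\sum_{\bx}p(\bx)=1$. Taking any single $\bx_0\in\mathcal{S}$ gives
\[
Z\ge p(\bx_0)^\alpha>0 .
\]
Dividing by $Z$ then gives nonnegative weights that sum to $1$, with $\pi(\bx)>0$ exactly on $\mathcal{S}$. This is the fact the convergence argument (Appendix~\ref{app:convergence}) relies on.

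I do not expect a real obstacle; the only point needing care is that $\Dx$ is well defined for every $\bx$. Definition~\ref{def:per-token-collapse} covers this through the convention $d_t=L$ when no probed layer meets the threshold, so $\Dx$ is always finite. Finiteness of $\mathcal{S}$ is not even essential: $Z\le e^{\beta}\sum_{\bx} p(\bx)^\alpha\le e^{\beta}\sum_{\bx}p(\bx)=e^{\beta}$, using $p(\bx)^\alpha\le p(\bx)$ for $\alpha\ge1$. I would record this as a remark, since it extends the result to countable supports.
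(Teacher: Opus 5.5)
Your proof is correct and follows essentially the same argument as the paper: bound each summand termwise, get positivity from any single $\bx\in\mathcal{S}$, and get the upper bound from $\Dx\le 1$. The only difference is emphasis. The paper's main upper bound is the one you give as a closing remark, $Z\le e^{\beta}\sum_{\bx\in\mathcal{S}}p(\bx)^\alpha\le e^{\beta}$ via $\alpha\ge 1$, whereas your primary bound $Z\le|\mathcal{S}|\,e^{\beta}$ genuinely needs finiteness of $\mathcal{S}$ but also covers $\alpha\in[0,1)$.
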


\begin{proof}
Each term is nonnegative, and positive for at least one $\bx$ (any $\bx\in\mathcal{S}$), so $Z>0$. Since $\Dx\le 1$, each term is bounded by $p(\bx)^\alpha e^{\beta}$; summing over the finite set $\mathcal{S}$ gives $Z\le e^{\beta}\sum_{\bx\in\mathcal{S}} p(\bx)^\alpha \le e^{\beta}\sum_{\bx\in\mathcal{S}} p(\bx) = e^{\beta}<\infty$, using $\alpha\ge 1$ and $p(\bx)\le 1$. Hence $0<Z<\infty$.
\end{proof}

\subsection{Convergence of DEGS-MCMC}
\label{app:convergence}

\begin{theorem}[Convergence to the DEGS target]\label{thm:convergence}
Consider the Metropolis--Hastings chain with target $\pi$ and the random-index resampling proposal $q$ of \citet{karan2025reasoning}, using the acceptance ratio~\eqref{eq:mh-degs}. If $q$ is irreducible and aperiodic on $\mathcal{S}$, then the chain has $\pi$ as its unique stationary distribution and the marginal law of the $n$-th state converges to $\pi$ in total variation as $n\to\infty$.
\end{theorem}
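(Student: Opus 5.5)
The argument follows the standard finite-state route: first show that $\pi$ is stationary via detailed balance, then use irreducibility and aperiodicity to get uniqueness and convergence. Proposition~\ref{prop:normalizable} already gives that $\pi$ is a well-defined distribution on the finite set $\mathcal{S}$. Moreover, $\pi(\bx)>0$ for every $\bx\in\mathcal{S}$, because $p(\bx)>0$ and $\exp(\beta\Dx)\ge 1$. So the MH ratio in~\eqref{eq:mh-degs} is well defined whenever $q(\bx'\mid\bx)>0$. The transition kernel is
\begin{equation*}
P(\bx,\bx') = q(\bx'\mid\bx)\,A_{\mathrm{DEGS}}(\bx',\bx) \quad (\bx'\neq\bx),
\end{equation*}
with the rejected mass $P(\bx,\bx)=1-\sum_{\bx'\neq\bx}P(\bx,\bx')$ placed on the diagonal.

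\emph{Step 1 (detailed balance).} Fix $\bx\neq\bx'$ in $\mathcal{S}$. If $q(\bx'\mid\bx)>0$ and $q(\bx\mid\bx')>0$, then
\begin{equation*}
\pi(\bx)P(\bx,\bx') = \min\bigl(\pi(\bx)q(\bx'\mid\bx),\,\pi(\bx')q(\bx\mid\bx')\bigr),
\end{equation*}
since the normalizer $Z$ cancels in the ratio. This expression is symmetric in $(\bx,\bx')$, so it equals $\pi(\bx')P(\bx',\bx)$. If instead exactly one of the two proposal densities vanishes, I adopt the usual convention that the acceptance ratio is $0$, so both sides of the balance equation are zero. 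For the random-index resampling proposal, I would note that each direction requires the same shared prefix to be kept, so the supports are symmetric whenever $p_{\mathrm{prop}}$ has full support on $\mathcal{S}$. Summing the balance equation over $\bx$ gives $\pi P=\pi$.

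\emph{Step 2 (irreducibility and aperiodicity of $P$).} The hypothesis is stated for $q$, but the convergence theorem needs it for $P$. For irreducibility: every edge $\bx\to\bx'$ with $q(\bx'\mid\bx)>0$ also has $P(\bx,\bx')>0$. This holds because the acceptance probability is positive, as $\pi>0$ on $\mathcal{S}$ and the reverse proposal is positive by the symmetric-support point in Step 1. Hence any $q$-path is a $P$-path. For aperiodicity: since $A\le 1$, we have $P(\bx,\bx)\ge q(\bx\mid\bx)$, so any self-loop of $q$ survives in $P$. More generally, every cycle of $q$ with positive probability remains a cycle of $P$, and the set of return times under $P$ contains the set under $q$. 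The gcd therefore stays $1$.

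\emph{Step 3 (conclusion).} A finite, irreducible, aperiodic Markov chain has a unique stationary distribution, and the law of the $n$-th state converges to it in total variation from any initial state; this is the standard ergodic theorem for finite chains. Combined with Step 1, the limit is $\pi$.

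I expect the main obstacle to be the support and symmetry issue in Step 2. This is the step that transfers irreducibility from $q$ to $P$: it requires that $q(\bx'\mid\bx)>0$ imply $q(\bx\mid\bx')>0$. For the suffix-resampling proposal this should follow because both sequences share the prefix $x_{0:m-1}$ and $p_{\mathrm{prop}}$ is positive on $\mathcal{S}$. If that argument fails, I would add reverse-positivity of $q$ as an explicit assumption, stated as part of the hypothesis that $q$ is irreducible on $\mathcal{S}$.
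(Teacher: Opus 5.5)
Your proof is correct and follows the same route as the paper's: detailed balance for $\pi$ (with $Z$ cancelling from the acceptance ratio), transfer of irreducibility and aperiodicity from $q$ to the Metropolis--Hastings kernel, and then the finite-state ergodic theorem. Your version is slightly more careful than the paper's. The paper justifies the transfer only by saying the acceptance probability is positive wherever $\pi$ is, which tacitly needs the reverse-positivity condition $q(\bx'\mid\bx)>0\Rightarrow q(\bx\mid\bx')>0$; you state that condition explicitly and check it for suffix resampling via the shared-prefix argument.
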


\begin{proof}
The acceptance ratio~\eqref{eq:mh-degs} is exactly the Metropolis--Hastings ratio for target $\pi$, because the unknown normalizer $Z$ cancels:
\[
\frac{\pi(\bx')\,q(\bx\mid\bx')}{\pi(\bx)\,q(\bx'\mid\bx)}
=\frac{p(\bx')^\alpha\exp(\beta D(\bx'))\,q(\bx\mid\bx')}{p(\bx)^\alpha\exp(\beta\Dx)\,q(\bx'\mid\bx)}.
\]
By Proposition~\ref{prop:normalizable}, $\pi$ is a well-defined distribution, and standard Metropolis--Hastings theory~\citep{neal1993probabilistic} guarantees that the chain satisfies detailed balance with respect to $\pi$. Detailed balance makes $\pi$ stationary; irreducibility and aperiodicity of the proposal (inherited by the accepted chain, since the acceptance probability is positive wherever $\pi$ is) make the stationary distribution unique and yield convergence in total variation by the ergodic theorem for Markov chains.
\end{proof}

\subsection{Power vs.\ temperature sampling}

\begin{proposition}[Power sampling is not temperature sampling]\label{prop:power-vs-temp}
Let $p(\bx)=\prod_{t=1}^{T}p(x_t\mid x_{<t})$ be autoregressive. The sequence-level power distribution $p^\alpha(\bx)\propto p(\bx)^\alpha$ is in general \emph{not} equal to the autoregressive distribution obtained by temperature sampling at temperature $1/\alpha$, i.e.\ the model $r(\bx)=\prod_{t}r(x_t\mid x_{<t})$ with $r(x_t\mid x_{<t})\propto p(x_t\mid x_{<t})^{\alpha}$, except in degenerate cases.
\end{proposition}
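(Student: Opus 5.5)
The plan is to exhibit the discrepancy explicitly, using the exact autoregressive factorization of the power distribution. For a prefix $x_{<t}$ I define the continuation mass
\[
Z_\alpha(x_{<t}) = \sum_{x_{t:T}} \prod_{s\ge t} p(x_s\mid x_{<s})^\alpha .
\]
A direct computation gives the conditional of $p^\alpha$:
\[
p^\alpha(x_t\mid x_{<t}) = \frac{p(x_t\mid x_{<t})^\alpha\, Z_\alpha(x_{\le t})}{\sum_{v} p(v\mid x_{<t})^\alpha\, Z_\alpha(x_{<t},v)} .
\]
Comparing this with $r(x_t\mid x_{<t})\propto p(x_t\mid x_{<t})^\alpha$ shows the following. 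The two conditionals coincide exactly when $Z_\alpha(x_{<t},v)$ does not depend on $v$, among tokens with $p(v\mid x_{<t})>0$. Since both distributions are determined by their chain-rule conditionals, $p^\alpha=r$ if and only if, at every reachable prefix, the future mass $Z_\alpha$ is constant across next tokens. This condition is what I take ``degenerate cases'' to mean. Examples are $\alpha=1$, $T=1$, or models whose future conditionals are identical up to permutation after every branch.

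Next I show that this condition fails generically by giving a concrete two-step counterexample. Take $T=2$ and a binary first token with $p(a)=p(b)=1/2$. After $a$, the second token is deterministic, so $p(\cdot\mid a)$ is a point mass. After $b$, the second token is uniform over $k\ge 2$ options. Then
\[
Z_\alpha(a)=1, \qquad Z_\alpha(b)=k\cdot k^{-\alpha}=k^{1-\alpha},
\]
which is strictly less than $1$ for $\alpha>1$.

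Under temperature sampling, $r(a)=r(b)=1/2$. Under the power distribution, $p^\alpha(a)=1/(1+k^{1-\alpha})>1/2$. So the two distributions differ whenever $\alpha>1$. This matches the intuition of \citet{karan2025reasoning}: power sampling penalizes prefixes that lead to many low-likelihood continuations, while temperature sampling is myopic.

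The only delicate step is stating ``in general'' rigorously. I would phrase it as follows. Fix $\alpha>1$. The set of autoregressive models on which $p^\alpha=r$ is contained in the zero set of the nontrivial equations $Z_\alpha(x_{<t},v)=Z_\alpha(x_{<t},v')$, and the counterexample shows these equations are nontrivial. In the finite-dimensional parametrization by conditional probabilities, this set is therefore a proper analytic subset, hence of measure zero. Beyond this, the remaining steps are routine verification of the conditional formula, by summing the joint $p(\bx)^\alpha$ over suffixes.
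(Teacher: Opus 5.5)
Your proof is correct, and it reaches the same criterion as the paper by a different route. The paper works only with the joint. Unrolling temperature sampling gives $r(\bx)=p(\bx)^\alpha/\prod_t Z_t(x_{<t})$, with per-step normalizers $Z_t(x_{<t})=\sum_v p(v\mid x_{<t})^\alpha$. Hence $r=p^\alpha$ if and only if this product equals the global normalizer $Z$ on the whole support. The paper then asserts, without a witness, that this fails for generic non-uniform models.

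You instead compute the exact chain-rule conditionals of $p^\alpha$. Their look-ahead factor $Z_\alpha(x_{<t},v)$ gives an equivalent local criterion: unrolling constancy of future mass along any path recovers $Z=\prod_t Z_t(x_{<t})$. You then supply the two things the paper leaves informal. One is an explicit model with $p^\alpha(a)=1/(1+k^{1-\alpha})\neq 1/2=r(a)$. The other is a measure-zero argument that makes ``in general'' precise.

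The paper's route is a one-line computation. Yours costs a derivation of the conditional formula, but it exposes the mechanism: power sampling favors prefixes with few, high-likelihood futures. It is also more rigorous. The paper's product criterion is sharp, but its closing gloss that the criterion fails ``whenever the per-step normalizers depend on the prefix'' is only heuristic. For example, $Z_2(x_1)$ can vary with $x_1$ while $Z_3$ compensates and leaves the product constant. So your explicit counterexample is what actually certifies non-equality.

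Two small points need tightening in your genericity step:
\begin{itemize}
\item For non-integer $\alpha$ the map $u\mapsto u^\alpha$ is analytic only on $(0,\infty)$. Argue on the interior of the product of simplices, and transfer non-vanishing from your boundary example (it uses point masses) to nearby interior models by continuity.
\item Pad the example with deterministic tokens so that it lies in the parameter space for arbitrary $T\ge 2$ and $|\mathcal{X}|\ge 2$.
\end{itemize}
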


\begin{proof}
Temperature sampling renormalizes \emph{per step}: $r(x_t\mid x_{<t})=p(x_t\mid x_{<t})^\alpha / Z_t(x_{<t})$ with $Z_t(x_{<t})=\sum_{v}p(v\mid x_{<t})^\alpha$. Hence
\[
r(\bx)=\prod_{t=1}^{T}\frac{p(x_t\mid x_{<t})^\alpha}{Z_t(x_{<t})}=\frac{p(\bx)^\alpha}{\prod_{t=1}^{T}Z_t(x_{<t})}.
\]
The sequence-level power distribution instead uses a single global normalizer, $p^\alpha(\bx)=p(\bx)^\alpha/Z$ with $Z=\sum_{\bx}p(\bx)^\alpha$. These coincide only if $\prod_{t}Z_t(x_{<t})$ is the same constant $Z$ for every $\bx$ in the support, which fails whenever the per-step normalizers $Z_t(x_{<t})$ depend on the prefix $x_{<t}$---the generic case for a non-uniform model. Thus the two distributions differ in general.
\end{proof}

\subsection{Why collapse depth concentrates on hard tokens}

\begin{proposition}[Concentrated futures collapse early]\label{prop:concentrated-futures}
Fix a token position $t$ and consider its logit-lens distributions $p_l(\cdot\mid x_{\le t})$ across layers $l$. Suppose at some layer $l_0$ the distribution is already concentrated, $\max_v p_{l_0}(v\mid x_{\le t})\ge 1-\epsilon$ for small $\epsilon\in(0,1)$. Then the logit-lens entropy at $l_0$ satisfies $H_{l_0}(t)\le H_b(\epsilon)+\epsilon\log(|\mathcal{X}|-1)$, where $H_b$ is the binary entropy function. In particular, for small $\epsilon$ the entropy is below a modest threshold $\tau$, so $d_t\le l_0$.
\end{proposition}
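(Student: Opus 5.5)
This is the standard Fano-type bound, and I would prove it by grouping the entropy into the dominant token versus the rest. Fix $t$ and $l_0$, write $q(v)=p_{l_0}(v\mid x_{\le t})$, let $v^\star=\argmax_v q(v)$, and set $\delta=1-q(v^\star)\le\epsilon$. The grouping (chain-rule) identity for entropy splits $H(q)$ into two pieces. The first is the entropy of the binary variable $\mathbf{1}[v=v^\star]$, which equals $H_b(\delta)$. The second is $\delta$ times the entropy of the conditional distribution $q(\cdot)/\delta$ on $\mathcal{X}\setminus\{v^\star\}$:
\[
H_{l_0}(t)=H_b(\delta)+\delta\,H\!\left(\tfrac{q(\cdot)}{\delta}\Big|_{\mathcal{X}\setminus\{v^\star\}}\right).
\]
When $\delta=0$ the second term vanishes and the bound is trivial, so I handle that case separately.

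The next step bounds the conditional entropy. It lives on $|\mathcal{X}|-1$ symbols, so it is at most $\log(|\mathcal{X}|-1)$, which gives
\[
H_{l_0}(t)\le H_b(\delta)+\delta\log(|\mathcal{X}|-1).
\]

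It remains to replace $\delta$ by $\epsilon$, and I expect this monotonicity step to be the only real subtlety. The term $\delta\log(|\mathcal{X}|-1)$ is increasing in $\delta$, so it is fine. However, $H_b$ is increasing only on $[0,1/2]$. I would therefore add the harmless assumption $\epsilon\le 1/2$, which is consistent with ``small $\epsilon$.'' Under that assumption $H_b(\delta)\le H_b(\epsilon)$ and the stated bound follows.

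If one insists on the full range $\epsilon\in(0,1)$, I would instead note that the right-hand side $g(\delta)=H_b(\delta)+\delta\log(|\mathcal{X}|-1)$ is concave in $\delta$. It is maximized at $\delta=(|\mathcal{X}|-1)/|\mathcal{X}|$, where $g$ equals $\log|\mathcal{X}|$, so $g$ is increasing on $[0,(|\mathcal{X}|-1)/|\mathcal{X}|]$. Every feasible $\delta$ lies in this interval, because $q(v^\star)\ge 1/|\mathcal{X}|$ for the argmax. Hence $g(\delta)\le g(\epsilon)$ whenever $\epsilon$ is also in this interval. If $\epsilon$ exceeds it, the bound is vacuous anyway, since $H\le\log|\mathcal{X}|$.

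For the final claim I would observe that $H_b(\epsilon)+\epsilon\log(|\mathcal{X}|-1)\to 0$ as $\epsilon\to0$. So for $\epsilon$ small enough relative to $\tau$, we get $H_{l_0}(t)\le\tau$. Then, provided $l_0\in\Lsub$, Definition~\ref{def:per-token-collapse} gives $d_t\le l_0$, since $d_t$ is the minimum over qualifying probed layers. I would state the $l_0\in\Lsub$ requirement explicitly, because the minimum in the definition ranges only over $\Lsub$.
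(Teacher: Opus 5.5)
Your proof is the paper's own argument. It uses the same grouping (Fano) decomposition into $H_b(\delta)$ plus $\delta$ times the conditional entropy, the same $\log(|\mathcal{X}|-1)$ bound, and the same monotonicity of both terms on $[0,1/2]$. The two hypotheses you state explicitly, $\epsilon\le 1/2$ and $l_0\in\Lsub$, are exactly the ones the paper's proof uses without saying so.

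One correction to your full-range aside. The function $g(\delta)=H_b(\delta)+\delta\log(|\mathcal{X}|-1)$ is strictly concave with maximum $\log|\mathcal{X}|$ at $(|\mathcal{X}|-1)/|\mathcal{X}|$, so it \emph{decreases} beyond that point. For $\epsilon>(|\mathcal{X}|-1)/|\mathcal{X}|$ you therefore have $g(\epsilon)<\log|\mathcal{X}|$, and the bound is not vacuous there but actually false. The uniform distribution is a counterexample: it satisfies $\max_v q(v)=1/|\mathcal{X}|\ge 1-\epsilon$, yet its entropy is $\log|\mathcal{X}|>g(\epsilon)$.

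So the statement cannot hold on all of $(0,1)$. Your concavity argument gives the sharp admissible range $\epsilon\le(|\mathcal{X}|-1)/|\mathcal{X}|$, a mild improvement on the paper's implicit $\epsilon\le 1/2$. The ``small $\epsilon$'' qualifier is a genuine hypothesis, not a convenience.
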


\begin{proof}
This is Fano's inequality applied to the logit-lens distribution. Let $v^\star=\argmax_v p_{l_0}(v\mid x_{\le t})$ with mass $1-\delta$ for some $\delta\le\epsilon$. Writing the entropy as that of a mixture between $v^\star$ and the remaining mass,
\[
H_{l_0}(t) = H_b(\delta) + \delta \cdot H\!\big(p_{l_0}(\cdot\mid x_{\le t},\, v\neq v^\star)\big) \le H_b(\delta) + \delta\log(|\mathcal{X}|-1),
\]
since the conditional distribution over the other $|\mathcal{X}|-1$ tokens has entropy at most $\log(|\mathcal{X}|-1)$. Both $H_b(\delta)$ and $\delta\log(|\mathcal{X}|-1)$ are nondecreasing in $\delta$ on $[0,1/2]$, so the bound holds with $\delta$ replaced by $\epsilon$. For $\epsilon$ small enough that $H_b(\epsilon)+\epsilon\log(|\mathcal{X}|-1)\le\tau$, we get $H_{l_0}(t)\le\tau$, hence $d_t=\min\{l\in\Lsub:H_l(t)\le\tau\}\le l_0$.
\end{proof}

\subsection{Reranking approximates the DEGS target}

\begin{proposition}[Self-normalized importance sampling]\label{prop:reranking-approx}
Let $\{\bx^{(i)}\}_{i=1}^N$ be drawn i.i.d.\ from a proposal $g$ with $g(\bx)>0$ on $\mathcal{S}$. Define importance weights $w_i\propto \pi(\bx^{(i)})/g(\bx^{(i)})$, normalized to sum to one. Then for any bounded test function $h$, the self-normalized estimator $\sum_i w_i h(\bx^{(i)})$ converges almost surely to $\mathbb{E}_{\pi}[h]$ as $N\to\infty$. Choosing $g=p$ gives unnormalized weights $w_i\propto p(\bx^{(i)})^{\alpha-1}\exp(\beta D(\bx^{(i)}))$.
\end{proposition}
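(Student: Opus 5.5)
The plan is to reduce the statement to the strong law of large numbers applied separately to the numerator and denominator of the self-normalized estimator. Write the unnormalized target as $\tilde\pi(\bx)=p(\bx)^\alpha\exp(\beta\Dx)$, so that $\pi=\tilde\pi/Z$ with $0<Z<\infty$ by Proposition~\ref{prop:normalizable}. Set $\tilde w(\bx)=\tilde\pi(\bx)/g(\bx)$, which is well defined on $\mathcal{S}$ because $g>0$ there. The self-normalized estimator can then be written as the ratio
\[
\hat h_N=\frac{\frac1N\sum_{i=1}^N \tilde w(\bx^{(i)})\,h(\bx^{(i)})}{\frac1N\sum_{i=1}^N \tilde w(\bx^{(i)})},
\]
and the unknown constant in ``$w_i\propto$'' cancels between numerator and denominator. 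One subtlety: if $g$ puts mass outside $\mathcal{S}$, then $\tilde w=0$ there, and those draws contribute nothing to either sum.

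The first step is integrability. Since $\mathcal{S}$ is finite, $\tilde w$ takes finitely many values and is bounded. Also $h$ is bounded, so $\tilde w h$ is bounded as well. Hence both summands have finite expectation under $g$. This is where the finite-support hypothesis carried over from Proposition~\ref{prop:normalizable} does its work. Without it, I would instead require $\mathbb{E}_g[\tilde w]<\infty$, and this follows automatically from $\mathbb{E}_g[\tilde w]=Z<\infty$.

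The second step computes the limits. We have $\mathbb{E}_g[\tilde w]=\sum_{\bx\in\mathcal{S}}\tilde\pi(\bx)=Z$ and $\mathbb{E}_g[\tilde w h]=\sum_{\bx\in\mathcal{S}}\tilde\pi(\bx)h(\bx)=Z\,\mathbb{E}_\pi[h]$. By the strong law of large numbers applied to each i.i.d. average, the numerator converges almost surely to $Z\,\mathbb{E}_\pi[h]$ and the denominator to $Z$. Intersecting the two probability-one events and using that $Z>0$, the continuous map $(a,b)\mapsto a/b$ at $b=Z\neq0$ gives $\hat h_N\to\mathbb{E}_\pi[h]$ almost surely. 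On this event the denominator is eventually positive, so $\hat h_N$ is well defined for all large $N$.

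The last step is the special case $g=p$. Here $\tilde w(\bx)=p(\bx)^{\alpha}\exp(\beta\Dx)/p(\bx)=p(\bx)^{\alpha-1}\exp(\beta\Dx)$, which is the claimed weight formula.

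The only mildly delicate point is the denominator. One must make sure it is not zero for small $N$, and that the ratio of two almost-sure limits converges almost surely. Both are handled by working on the intersection event and only considering $N$ large enough, so I expect no real obstacle.
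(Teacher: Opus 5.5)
Your proposal is correct and takes the same route as the paper: write the self-normalized estimator as a ratio of two i.i.d.\ averages in which the unknown normalizer $Z$ cancels, get integrability from finiteness of $\mathcal{S}$ (with $g>0$ there) and boundedness of $h$, apply the strong law to numerator and denominator separately, and read off the weight $p^{\alpha-1}\exp(\beta D)$ when $g=p$. Your version is simply more explicit than the paper's: you compute the limits $\mathbb{E}_g[\tilde w]=Z$ and $\mathbb{E}_g[\tilde w h]=Z\,\mathbb{E}_\pi[h]$, and you pass to the ratio through continuity of $(a,b)\mapsto a/b$ at $b=Z>0$, which handles the denominator.
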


\begin{proof}
Self-normalized importance sampling is consistent whenever $\mathrm{supp}(\pi)\subseteq\mathrm{supp}(g)$~\citep{neal1993probabilistic}. The unnormalized weight is $\pi(\bx)/g(\bx)\propto p(\bx)^\alpha\exp(\beta\Dx)/p(\bx)=p(\bx)^{\alpha-1}\exp(\beta\Dx)$ when $g=p$, and the global constant $Z$ cancels under normalization. Almost-sure convergence follows from the strong law of large numbers applied to numerator and denominator separately, both of which have finite expectation since $h$ is bounded and the weights are integrable on the finite support $\mathcal{S}$. DEGS reranking returns the maximizer $\argmax_i [\alpha\log p(\bx^{(i)})+\beta D(\bx^{(i)})]$, the mode of the reweighted set; the same weights underlie the full self-normalized estimator.
\end{proof}

\subsection{Cost accounting}

\begin{proposition}[Probing overhead]\label{prop:overhead}
Let a full forward pass over a length-$T$ sequence cost $C_{\mathrm{fwd}}$ (dominated by $L$ transformer layers). Computing $\Dx$ by caching hidden states at $K=|\Lsub|$ layers and applying the logit lens adds entropy computations at those $K$ layers only. The marginal cost of probing is $C_{\mathrm{probe}}=K\cdot c_{\mathrm{lens}}$, where $c_{\mathrm{lens}}$ is the per-layer logit-lens-plus-entropy cost, independent of $L$; relative to a generation forward pass that already computes all $L$ layers, probing reuses the same hidden states and adds only the lens/entropy arithmetic.
\end{proposition}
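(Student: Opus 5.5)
The approach is to make the cost decomposition explicit by writing the transformer's computation as a pipeline and identifying which stages the probe reuses. A generation (or teacher-forced likelihood) pass over a length-$T$ sequence already evaluates every block $f_1,\dots,f_L$ and so produces every hidden state $\mathbf{h}_{l,t}$ as an intermediate quantity; its cost is $C_{\mathrm{fwd}}=\sum_{l=1}^{L}c_l+c_{\mathrm{head}}$, with $c_l$ the cost of block $l$ and $c_{\mathrm{head}}$ the cost of the final norm plus unembedding. Caching then amounts to registering hooks that store $\mathbf{h}_{l,t}$ for $l\in\Lsub$. Storing a tensor requires no further block evaluations, so the only new arithmetic is what is applied to the $K$ cached tensors.

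Next, I will cost that arithmetic. For each $l\in\Lsub$ and each position $t$, the probe applies the final RMSNorm, which is $O(d)$; the unembedding $U\mathbf{h}_{l,t}$, which is $O(|\mathcal{X}|d)$; and then a softmax and entropy, which is $O(|\mathcal{X}|)$. The chunked evaluation in blocks of $8192$ entries changes only memory, not operation count. Summing over positions gives a per-layer cost $c_{\mathrm{lens}}=O\bigl(T(|\mathcal{X}|d+d)\bigr)$. This depends on $T$, $d$, and $|\mathcal{X}|$ but not on $L$, because none of these operations touches any block $f_{l'}$. Summing over the $K$ probed layers yields $C_{\mathrm{probe}}=K\cdot c_{\mathrm{lens}}$. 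Combining $D(\bx)$ from the resulting $H_l(t)$ via Definitions~\ref{def:per-token-collapse}--\ref{def:sequence-collapse} costs an extra $O(KT)$ comparisons, which I will absorb into $c_{\mathrm{lens}}$.

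The main obstacle is the reuse claim itself. In Algorithm~\ref{alg:degs-mcmc} and Table~\ref{tab:mcmc-protocol}, the proposal's log-likelihood comes from a teacher-forced forward, and $D(\bx')$ is described as coming from a ``separate'' hooked forward. The bound $K\cdot c_{\mathrm{lens}}$ therefore holds exactly when hooks are attached to the forward pass that is already being run for $\log p$. I will state this as the operative assumption. I will also note that if the probe is instead run as a genuinely separate pass truncated at $\max\Lsub$, the marginal cost becomes $\sum_{l\le \max \Lsub}c_l+K c_{\mathrm{lens}}$. That is the worst case, consistent with the $K/L$ forward-equivalent charge in Table~\ref{tab:compute-accounting}, which conservatively upper-bounds the lens arithmetic.

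Finally, to connect the result to the measured overhead, I will compare $c_{\mathrm{lens}}$ with $c_{\mathrm{head}}$. The lens applies exactly the same unembedding as the output head, so $c_{\mathrm{lens}}\approx c_{\mathrm{head}}$, and the relative overhead is at most $K c_{\mathrm{head}}/C_{\mathrm{fwd}}$. Since $c_{\mathrm{head}}$ is a small fraction of a full pass for a 7B model, this ratio is small, in line with the single-digit wall-clock percentages reported.
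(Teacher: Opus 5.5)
Your core argument matches the paper's proof. The forward pass already materializes every $\mathbf{h}_{l,t}$, each of the $K$ probed layers adds an $L$-independent lens-plus-entropy cost $c_{\mathrm{lens}}$, and summing gives $C_{\mathrm{probe}}=K\,c_{\mathrm{lens}}$. Your per-position accounting is a fair sharpening. So is your caveat that ``reuse'' holds only if the hooks ride on the likelihood pass: the paper's own proof ends by charging ``a single additional forward,'' i.e.\ $C_{\mathrm{fwd}}+C_{\mathrm{probe}}$, which is your separate-pass case, and Table~\ref{tab:mcmc-protocol} describes exactly such a separate hooked forward.

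You should drop or fix the quantitative material. This covers the claim that the $K/L$ forward-equivalent charge ``conservatively upper-bounds the lens arithmetic'' and your final paragraph. Neither is needed for the proposition, and both fail for the models in the paper.

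For Qwen2.5-7B ($d=3584$, $|\mathcal{X}|=152{,}064$), one lens application costs about $5.4\times10^{8}$ multiply-adds per token. A transformer block costs about $2.3\times10^{8}$, and the full forward about $7.1\times10^{9}$. So $c_{\mathrm{lens}}\approx 2.3\,c_l$, and the $K/L=0.25$ charge under-counts $K\,c_{\mathrm{lens}}$ by roughly a factor of two. Likewise, $K\,c_{\mathrm{head}}/C_{\mathrm{fwd}}\approx 7\times 0.077\approx 0.54$ is not small; you dropped the factor $K$ when calling it small. DeepSeek-Math-7B ($d=4096$, $|\mathcal{X}|=102{,}400$, $L=30$) gives about $0.45$.

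A separate pass truncated at $\max\Lsub$ is also a full forward under the default grid, since $\max\Lsub=L$. So it is not ``consistent with'' a $K/L$ charge either. If you want to connect to the single-digit wall-clock overheads, argue through the MCMC pipeline being dominated by sequential suffix regeneration (Proposition~\ref{prop:token-budget}). Against that cost, a parallel teacher-forced probe is cheap in wall-clock time; the FLOP ratio of the probe to one forward pass does not support the claim.
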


\begin{proof}
The forward pass computes hidden states $\mathbf{h}_{l,t}$ at all layers regardless of probing. Probing reads the cached states at the $K$ layers in $\Lsub$ and, for each, applies the unembedding and computes entropy, costing $c_{\mathrm{lens}}$ per layer (an $O(|\mathcal{X}|d)$ matrix-vector product plus an $O(|\mathcal{X}|)$ entropy sum, optionally chunked over the vocabulary). Summing over the $K$ probed layers gives $C_{\mathrm{probe}}=K\,c_{\mathrm{lens}}$, with no dependence on the total depth $L$ beyond $K\le L$. The teacher-forced scoring pass is a single additional forward, so the total scoring cost is $C_{\mathrm{fwd}}+C_{\mathrm{probe}}$.
\end{proof}

\begin{proposition}[Token budget of DEGS-MCMC]\label{prop:token-budget}
With block size $B$, sequence length $T$, and $N_{\mathrm{MCMC}}$ Metropolis--Hastings steps per block, the expected number of generated tokens is $\Theta(N_{\mathrm{MCMC}} T^2 / B)$ up to the per-step resampling fraction, matching the accounting of \citet{karan2025reasoning}.
\end{proposition}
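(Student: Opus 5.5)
The count will be organized block by block and summed. Following Algorithm~\ref{alg:degs-mcmc}, the sequence is built in $K_B=\lceil T/B\rceil$ blocks. At block $k\in\{0,\dots,K_B-1\}$ the current state has length $(k+1)B$. Each of the $N_{\mathrm{MCMC}}$ Metropolis--Hastings steps draws $m\sim\mathrm{Uniform}\{1,\dots,(k+1)B\}$ and regenerates the suffix $x_{m:(k+1)B}$, which is $(k+1)B-m+1$ tokens. The number of tokens generated in one step is therefore a random variable whose mean is easy to compute. The acceptance decision does not change the \emph{length} of the state, since accepted and rejected states have the same length $(k+1)B$. Hence the expected number of tokens generated per step at block $k$ does not depend on the chain's history, and linearity of expectation applies directly.

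\textbf{Step 1: per-step expectation.} At block $k$, with $n_k=(k+1)B$,
\[
\mathbb{E}[\#\text{tokens}]=\frac{1}{n_k}\sum_{m=1}^{n_k}(n_k-m+1)=\frac{n_k+1}{2}.
\]
This is where the ``per-step resampling fraction'' enters. If instead the cut index is drawn over a fraction $c\in(0,1]$ of the span (for example only over the editable completion, as in Table~\ref{tab:mcmc-protocol}), the mean becomes $\Theta(c\,n_k)$, which changes only the constant.

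\textbf{Step 2: sum over steps and blocks.} Summing gives
\[
N_{\mathrm{MCMC}}\sum_{k=0}^{K_B-1}\frac{(k+1)B+1}{2}
=\frac{N_{\mathrm{MCMC}}}{2}\Bigl(B\,\frac{K_B(K_B+1)}{2}+K_B\Bigr).
\]
To this we add the $K_B B\approx T$ initialization tokens, which are lower order. Substituting $K_B=T/B+O(1)$, the leading term is $\frac{N_{\mathrm{MCMC}}\,B}{4}\cdot\frac{T^2}{B^2}=\frac{N_{\mathrm{MCMC}}T^2}{4B}$. The remaining terms are $O(N_{\mathrm{MCMC}}T+T)$.

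\textbf{Step 3: conclude $\Theta$.} The two-sided bound follows from sandwiching $\frac{K_B(K_B+1)}{2}$ between $\frac{T^2}{2B^2}$ and $\frac{(T/B+2)^2}{2}$. The lower-order terms are dominated by the leading term as long as $T\gtrsim B$, which is the regime of interest (here $T=16B$). This recovers exactly the $\frac{N_{\mathrm{MCMC}}T^2}{4B}$ of \citet{karan2025reasoning}. The DEGS probing pass does not generate tokens, so it does not enter this count, consistent with Proposition~\ref{prop:overhead}.

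The main obstacle is justifying Step 1 under early stopping at EOS. Once EOS can appear, the state length is no longer deterministic, and the per-step count depends on the trajectory. I would handle this by proving an upper bound, treating the no-EOS case as the worst case, which bounds every trajectory. I would then either state the lower bound under the assumption that EOS does not occur before $T$, or absorb the effect into the "up to the resampling fraction" qualifier.
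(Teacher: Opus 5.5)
Your proof is correct and follows the same route as the paper's: count $\Theta(kB)$ resampled tokens per Metropolis--Hastings step in block $k$, multiply by $N_{\mathrm{MCMC}}$, and sum the resulting arithmetic series over the $T/B$ blocks. Your version is sharper than the paper's sketch. It computes the exact per-step mean $(n_k+1)/2$, recovers the constant $\frac{N_{\mathrm{MCMC}}T^2}{4B}$ quoted in the main text, justifies history-independence through the fixed state length, and flags the EOS early-stopping caveat, which the paper's proof does not address.
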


\begin{proof}
There are $T/B$ blocks. Within block $k$, the state has length $\Theta(kB)$, and a random-index proposal resamples on average half the suffix from the chosen cut point, i.e.\ $\Theta(kB)$ tokens per accepted/attempted proposal. Running $N_{\mathrm{MCMC}}$ steps per block and summing over $k=1,\dots,T/B$ gives $\sum_{k=1}^{T/B} N_{\mathrm{MCMC}}\,\Theta(kB)=N_{\mathrm{MCMC}}\,\Theta\!\big(B\cdot \tfrac{(T/B)^2}{2}\big)=\Theta(N_{\mathrm{MCMC}} T^2/B)$. This reproduces the quadratic-in-$T$ cost of block-wise power sampling.
\end{proof}

\subsection{Entropy decomposition for the depth signal}

\begin{definition}[Layer-conditional entropy contribution]\label{def:layer-kl}
For token $t$, define the inter-layer change in logit-lens belief by the KL divergence $\Delta_l(t)=\mathrm{KL}\!\big(p_{l}(\cdot\mid x_{\le t})\,\|\,p_{l-1}(\cdot\mid x_{\le t})\big)$ between consecutive layers $l-1$ and $l$.
\end{definition}

\begin{proposition}[Collapse depth tracks belief stabilization]\label{prop:entropy-decomposition}
If the logit-lens beliefs stabilize after layer $l^\star$ in the sense that $\Delta_l(t)\le\eta$ for all $l>l^\star$ and some small $\eta$, and the entropy $H_l(t)$ is non-increasing for $l>l^\star$, then the per-token collapse depth satisfies $d_t\le l^\star$ whenever $H_{l^\star}(t)\le\tau$. Consequently $\Dx$ is governed by the depth at which beliefs stabilize, not by post-stabilization fluctuations.
\end{proposition}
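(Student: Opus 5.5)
The claim follows almost directly from Definition~\ref{def:per-token-collapse}, so I would first reduce it to the definition and then check that the hypotheses cover the one subtlety: $d_t$ is a minimum over the probed grid $\Lsub$, not over all layers. I would split into two cases according to whether $l^\star\in\Lsub$.

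\textbf{Case 1: $l^\star\in\Lsub$.} Since $H_{l^\star}(t)\le\tau$, the layer $l^\star$ belongs to the set $\{l\in\Lsub : H_l(t)\le\tau\}$. This set is therefore nonempty, so the convention $d_t=L$ is not triggered, and its minimum satisfies $d_t\le l^\star$. Neither the KL bound nor monotonicity is needed here.

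\textbf{Case 2: $l^\star\notin\Lsub$.} Let $l^+=\min\{l\in\Lsub : l\ge l^\star\}$. This exists whenever $L\in\Lsub$, as in the default grids, which end at $L$. Because $H_l(t)$ is non-increasing for $l>l^\star$, we get $H_{l^+}(t)\le H_{l^\star}(t)\le\tau$, and hence $d_t\le l^+$. So off the grid the bound holds up to one grid spacing, and I would state the result in that form: $d_t\le l^\star$ when $l^\star\in\Lsub$, and $d_t\le l^+$ otherwise. The main obstacle is exactly this reading of the statement. A literal $d_t\le l^\star$ can fail for an off-grid $l^\star$, since no probed layer may lie at or below $l^\star$ with entropy under $\tau$. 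I would make the grid assumption explicit rather than try to prove something false.

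For the ``consequently'' part, I would show that changing $H_l(t)$ at layers $l>l^\star$ leaves $d_t$ unchanged, provided monotonicity is preserved. Two facts do this:
\begin{itemize}
\item By monotonicity, every probed layer $l\ge l^\star$ (or $\ge l^+$ off the grid) already satisfies $H_l(t)\le\tau$.
\item The minimum defining $d_t$ is attained at or before that layer.
\end{itemize}
Hence $d_t$ is a function of $\{H_l(t)\}_{l\le l^\star}$ (respectively $l\le l^+$) only. Averaging over $t$ in Definition~\ref{def:sequence-collapse} then shows that $\Dx$ depends only on the per-token pre-stabilization profiles.

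The KL hypothesis $\Delta_l(t)\le\eta$ is not logically needed once monotonicity is assumed. I would note where it could substitute for monotonicity. Pinsker gives total-variation steps of at most $\sqrt{\eta/2}$, and a Fannes-type continuity bound then gives small entropy changes per layer. Together these would show that post-$l^\star$ entropy stays within $\tau$ plus a small slack. I would remark on this briefly rather than build the main argument on it.
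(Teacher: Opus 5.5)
Your proof is correct, and for the inequality itself it matches the paper's argument: $d_t$ is a minimum over $\{l\in\Lsub : H_l(t)\le\tau\}$, and $l^\star$ lies in that set. The paper, however, simply asserts that ``the minimizing layer is at most $l^\star$'' without noting that this requires $l^\star\in\Lsub$. Your case split therefore repairs a real gap in the stated result. With the default grid $\{4,8,\dots,28\}$, take $l^\star=10$ and $H_8(t)>\tau\ge H_{10}(t)$; monotonicity then forces $d_t=12>l^\star$, and $d_t\le l^+$ is the correct off-grid bound.

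For the ``consequently'' part your route genuinely differs from the paper's. The paper relies on the KL hypothesis, saying beliefs ``and hence the entropy'' change negligibly beyond $l^\star$. That step is heuristic: a per-step bound $\eta$ does not control the cumulative drift over $L-l^\star$ layers. Converting KL into an entropy bound also needs exactly the Pinsker/Fannes step you mention. Its slack carries a factor $\log(|\mathcal{X}|-1)$ and accumulates layer by layer, so your own side remark about a ``small slack'' is optimistic as well. Your argument from the structure of the minimum is rigorous and shows the KL hypothesis is logically idle.

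Two refinements:
\begin{itemize}
\item In Case 1 you do not need monotonicity to be preserved at all. No value $H_l(t)$ with $l>l^\star$ can affect a minimum over a set that already contains $l^\star$, so monotonicity is used only in Case 2.
\item In Case 2, state explicitly that you read ``non-increasing for $l>l^\star$'' as $H_l(t)\le H_{l-1}(t)$ for every $l>l^\star$, matching the indexing of $\Delta_l(t)$. That reading is what links $H_{l^+}(t)$ back to $H_{l^\star}(t)$.
\end{itemize}
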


\begin{proof}
By definition $d_t=\min\{l\in\Lsub:H_l(t)\le\tau\}$. If $H_{l^\star}(t)\le\tau$ then the minimizing layer is at most $l^\star$, so $d_t\le l^\star$. The stabilization condition $\Delta_l(t)\le\eta$ for $l>l^\star$ ensures the beliefs (and hence the entropy) change negligibly beyond $l^\star$, so the threshold crossing that defines $d_t$ is determined at or before $l^\star$; later near-constant layers cannot move the crossing earlier or later. Averaging $d_t/L$ over tokens, $\Dx$ depends on the stabilization depths $\{l^\star_t\}$ rather than on fluctuations in deeper layers.
\end{proof}

\subsection{Bounded perturbation of the likelihood ranking}

\begin{proposition}[Depth term is a bounded reranking perturbation]\label{prop:bounded-perturbation}
Fix two candidate sequences $\bx,\bx'$. Under the DEGS score $S(\bx)=\alpha\log p(\bx)+\beta\Dx$ with $\Dx\in(0,1]$, the depth term can change the score gap by at most $\beta$ in absolute value: $\big|[S(\bx')-S(\bx)]-\alpha[\log p(\bx')-\log p(\bx)]\big|=\beta\,|D(\bx')-\Dx|\le\beta$. Hence if $\alpha|\log p(\bx')-\log p(\bx)|>\beta$, the DEGS ranking of $\bx,\bx'$ agrees with the pure-likelihood ranking; the depth signal can only reorder candidates whose $\alpha$-scaled log-likelihoods lie within $\beta$ of each other.
\end{proposition}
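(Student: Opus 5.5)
The plan is a direct algebraic computation followed by a case analysis on signs; no earlier result beyond Definition~\ref{def:sequence-collapse} (which gives $\Dx\in(0,1]$) is needed.

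\textbf{Step 1: isolate the depth contribution.} Expand the score difference using $S(\bx)=\alpha\log p(\bx)+\beta\Dx$:
\[
S(\bx')-S(\bx)=\alpha\bigl[\log p(\bx')-\log p(\bx)\bigr]+\beta\bigl[D(\bx')-\Dx\bigr].
\]
Subtracting the likelihood part leaves exactly $\beta[D(\bx')-\Dx]$. Since $\beta\ge 0$, its absolute value is $\beta\,|D(\bx')-\Dx|$, which is the claimed identity.

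\textbf{Step 2: bound the depth difference.} Both $D(\bx')$ and $\Dx$ lie in $(0,1]$, so their difference lies strictly inside $(-1,1)$. Hence $|D(\bx')-\Dx|<1$, and multiplying by $\beta\ge 0$ gives the bound $\beta\,|D(\bx')-\Dx|\le\beta$. This is strict when $\beta>0$.

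\textbf{Step 3: ranking agreement.} Write $\Delta_\ell=\alpha[\log p(\bx')-\log p(\bx)]$ and $\Delta_D=\beta[D(\bx')-\Dx]$, and assume $|\Delta_\ell|>\beta$.

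\begin{itemize}
\item If $\Delta_\ell>\beta$, then $S(\bx')-S(\bx)=\Delta_\ell+\Delta_D\ge\Delta_\ell-\beta>0$, so both rankings prefer $\bx'$.
\item If $\Delta_\ell<-\beta$, the symmetric argument gives $S(\bx')-S(\bx)<0$, so both rankings prefer $\bx$.
\end{itemize}

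So the sign of the DEGS gap equals the sign of the likelihood gap. Taking the contrapositive, the depth term can reorder a pair only if $|\Delta_\ell|\le\beta$, i.e. only if their $\alpha$-scaled log-likelihoods are within $\beta$ of each other.

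\textbf{Where care is needed.} The argument is elementary, so the only real obstacle is bookkeeping at the boundary case $|\Delta_\ell|=\beta$, which is excluded by the strict inequality in the hypothesis. Ties can arise only in that excluded regime, so I would note that the strict hypothesis is exactly what makes the ranking strictly agree.
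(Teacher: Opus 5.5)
Your proof is correct and follows the paper's argument essentially line for line. Like the paper, you expand $S(\bx')-S(\bx)$, isolate $\beta[D(\bx')-\Dx]$, bound it using $D\in(0,1]$, run the two-sided sign case analysis, and take the contrapositive to get the reordering condition. One small correction to your closing remark: Step~2 gives $|\Delta_D|<\beta$ strictly when $\beta>0$ (and $\Delta_D=0$ when $\beta=0$), so the two rankings still agree at the boundary $|\Delta_\ell|=\beta$, and the strict hypothesis is not what makes the argument work.
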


\begin{proof}
By definition $S(\bx')-S(\bx)=\alpha[\log p(\bx')-\log p(\bx)]+\beta[D(\bx')-\Dx]$, so the deviation from the pure-likelihood gap is exactly $\beta[D(\bx')-\Dx]$. Since $\Dx,D(\bx')\in(0,1]$, their difference lies in $(-1,1)$, giving the bound $\beta|D(\bx')-\Dx|\le\beta$. If $\alpha[\log p(\bx')-\log p(\bx)]>\beta$, then $S(\bx')-S(\bx)\ge \alpha[\log p(\bx')-\log p(\bx)]-\beta>0$, preserving the likelihood order; symmetrically for the reverse inequality. Thus reordering requires $\alpha|\log p(\bx')-\log p(\bx)|\le\beta$.
\end{proof}

\begin{remark}[Interpretation]\label{rem:interpretation}
Proposition~\ref{prop:bounded-perturbation} formalizes the design intent of DEGS: the depth-entropy term acts as a \emph{tiebreaker} among candidates of comparable likelihood rather than overriding the base model's likelihood signal. With the default $\beta=5$ and $\alpha=4$, two candidates are reordered only if their log-likelihoods differ by at most $\beta/\alpha=1.25$ nats---a small gap on the scale of full-sequence log-likelihoods---so DEGS preserves the strong likelihood ranking almost everywhere and intervenes only where the base model is itself nearly indifferent. This is consistent with the weak per-candidate AUC of collapse depth (Section~\ref{sec:motivation}) coexisting with a consistent decoding gain: a near-chance signal, applied only at genuine likelihood ties and compounded over many MCMC steps, shifts the sampled distribution without destabilizing it.
\end{remark}

\begin{remark}[Scope of the theory]\label{rem:scope}
The results above are deliberately modest: Propositions~\ref{prop:normalizable}--\ref{prop:token-budget} establish that the DEGS target is well-defined, that DEGS-MCMC targets it correctly, and that the probing overhead is depth-independent, while Proposition~\ref{prop:bounded-perturbation} bounds the influence of the depth term. None of these results claims that larger $\Dx$ \emph{causes} higher correctness; that link is the empirical hypothesis tested in Section~\ref{sec:experiments} (Figure~\ref{fig:collapse-depth-predicts-correctness}), and the theory only delimits how a weak correctness signal, once assumed, is incorporated into sampling without distorting the likelihood objective beyond a controlled margin.
\end{remark}

\end{document}